\newtheorem{assumption}{Assumption}
\newenvironment{lemma}[1]{\par\noindent\textbf{Lemma}:\space#1}{}
\def\eqref#1{equation~\ref{#1}}
\def\1{\bm{1}}
\DeclareMathAlphabet{\mathsfit}{\encodingdefault}{\sfdefault}{m}{sl}
\SetMathAlphabet{\mathsfit}{bold}{\encodingdefault}{\sfdefault}{bx}{n}
\title{Sample-Efficient Optimisation with Probabilistic Transformer Surrogates}
\author{%
  Alexandre Maraval\thanks{Authors contributed equally} \\
  Huawei R\&D London\\
  \texttt{alexandre.maraval@huawei.com} \\
  \And
  Matthieu Zimmer\thanks{Authors contributed equally} \\
  Huawei R\&D London\\
  \texttt{matthieu.zimmer@huawei.com} \\
  \And
  Antoine Grosnit\thanks{Authors contributed equally} \\
  Huawei R\&D London\\
  \texttt{antoine.grosnit@huawei.com} \\
  \And
  Rasul Tutunov \\
  Huawei R\&D London\\
  \texttt{rasul.tutunov@huawei.com} \\
  \And
  Jun Wang \\
  University College London\\
  \texttt{jun.wang@cs.ucl.ac.uk} \\
  \And
  Haitham Bou Ammar\\
  Huawei R\&D London\\
  University College London\\
  \texttt{haitham.ammar@huawei.com} \\
}
\begin{document}

\maketitle

\begin{abstract}
  
  %Bayesian Optimisation (BO) is a widely used black-box optimisation procedure in which a probabilistic and data-efficient surrogate model -- often a Gaussian Process (GP) -- is utilised to guide exploration. 
  Faced with problems of increasing complexity, recent research in Bayesian Optimisation (BO) has focused on adapting deep probabilistic models as flexible alternatives to Gaussian Processes (GPs). 
  In a similar vein, this paper investigates the feasibility of employing state-of-the-art probabilistic transformers in BO. 
  %Similarly,  this pahey per  investigates the use of stat-of-the-art probabilistic transformers in BO.
  Upon further investigation, we observe two drawbacks stemming from their training procedure and loss definition, hindering their direct deployment as proxies in black-box optimisation. First, we notice that these models are trained on uniformly distributed inputs, which impairs predictive accuracy on non-uniform data - a setting arising from any typical BO loop due to exploration-exploitation trade-offs. Second, we realise that training losses (e.g., cross-entropy) only \emph{asymptotically} guarantee accurate posterior approximations, i.e., after arriving at the global optimum, which generally cannot be ensured. At the stationary points of the loss function, however, we observe a degradation in predictive performance especially in exploratory regions of the input space. To tackle these shortcomings we introduce two components: 1) a \textit{BO-tailored training prior} supporting non-uniformly distributed points, and 2) a novel approximate posterior regulariser trading-off accuracy and input sensitivity to filter favourable stationary points for improved predictive performance. In a large panel of experiments, we demonstrate, for the first time, that \text{one} transformer pre-trained on data sampled from \emph{random GP priors} produces competitive results on 16 benchmark black-boxes compared to GP-based BO. Since our model is only pre-trained once and used in all tasks \textit{without} any retraining and/or fine-tuning, we report an order of magnitude time-reduction, while matching and sometimes outperforming GPs. %on up-to 63\% of those tasks.   

\end{abstract}

% OLD ABSTRACT
% \begin{abstract}
%   Bayesian Optimisation (BO) is a widely used black-box optimisation procedure in which a probabilistic and data-efficient surrogate model is used to guide exploration, often chosen to be a Gaussian Process (GP).

%   The growing complexity of the problems tackled by the community calling for more flexible and general models, we propose to explore the use of a Deep Probabilistic model learnt using a transformer architecture that we refer to as the Probabilistic Transformer (PT).

%   However, this model assumes uniformly sampled training datasets and has difficulties taking new observations into account in its predictions; noticeably limiting its use in a BO setting.
%   To tackle these shortcomings we introduce two components:
%   a \textit{BO-realistic training prior} with non-uniformly distributed points
%   and a \textit{regulariser} trading off accuracy and sensitivity.
  
%   We run a large panel of experiments showing that pre-training the PT with our BO-focused components on \textit{randomly generated data} drastically improves its performance making it a competitive alternative to a GP fit on \textit{task-specific data}.
%   The PT is only trained once and used in all tasks \textit{without} needing to fine-tune or retrain, thus reducing the optimisation time by 1 order of magnitude, and matching - or even outperforming - GP-based BO on up to 50\% of the tasks.
% \end{abstract}

\section{Introduction}\label{sec:intro}
Black-box optimisation focuses on objective functions not available in closed form and has proven to be of paramount importance in many engineering fields including, but not limited to, automatic machine learning \citep{2012_Snoek, 2020_Grosnit,HeboPaper}, drug discovery \citep{AntBO2022, Mole_paper, DrugPaper_two}, chemical engineering \citep{2020_Griffiths, LSBO_Antoine_2021, NEURIPS2020_81e3225c}, and  experimental design \citep{NEURIPS2019_d55cbf21, 8957442,ExpDesignpaper_two}. Bayesian optimisation (BO) methods provide a principled solution for black-box optimisation. Despite a vast variety of suggested BO algorithms \citep{ReviewBO,AntBO2022,HeboPaper}, these methods tend to share two main components: 1) a probabilistic surrogate model of the objective, and 2) a strategy for querying novel promising probes to evaluate.   

Gaussian Processes (GPs) are a popular choice of probabilistic surrogates in BO due to their sample-efficiency and ability to quantify uncertainty \citep{2006_Rasmussen}. BO methods equipped with GP surrogates effectively leverage posterior uncertainties to achieve a neat balance between exploration and exploitation, resulting in  sample-efficient performance.
%Apart from strong empirical results, GPs also provide a rigorous mathematical framework for Bayesian inference. 
Although traditional GPs deal with homoscedastic scenarios and scale poorly (as cubic polynomial) with the number of inputs, various extensions have been suggested to tackle those issues; see \citep{HeboPaper,NIPS2005_4491777b,HeteroGP} and Section \ref{sec:background_and_related_work} for details.         

Concurrently with the successes of GPs, recent research has focused on uncertainty-aware neural networks with the promise of more flexible and scalable probabilistic models \citep{2015_Snoek,1997_Bishop,1992_Neal,2015_Blundell,2020_Jospin}. These efforts include but are not limited to new architectures for neural networks \cite{2015_Snoek}, parametrised priors over the network's weights \citep{1997_Bishop,1992_Neal,2015_Blundell,2020_Jospin}, dropout regularisation techniques \citep{2012_Hinton, 2014_Srivastava}, deep ensembles \citep{2017_Lakshminarayanan, 2019_Fort}, neural processes \citep{2018_Garnelo, 2018_Garnelo_a, 2021_Chakrabarty} and probabilistic transformers (PTs)  \citep{2017_Vaswani, 2021_Muller}. Apart from achieving a superior modelling performance in a variety of supervised machine learning tasks \citep{2020_Dosovitskiy,2020_Touvron,2019_Ye,2019_Sun}, 
%transformer models shows the ability to perform Bayesian inference \citep{2021_Muller}.   
\citet{2021_Muller} show that they can successfully learn to approximate the posterior of Bayesian models by training on artificial data, sampled from those very models.

%Recently, \citep{2021_Muller} successfully apply transformers to replicate Bayesian posterior distribution for the true black-box objective by utilising data from a collection of replaceable priors. 

%These priors can be considered as black-box objectives sampled from the same family as a true one. Consequently, their approach requires only a small number of samples for the target objective and capable of replicating GPs with high accuracy. 

Given the impressive results in \citep{2021_Muller}, we wonder if such a model can easily be ported to BO to replace GP surrogates while ensuring sample-efficiency. During our early attempts, we realised two drawbacks prohibiting a direct application to BO. In this paper, we describe these two issues, propose resolution to each, and demonstrate, for the first time to the best of our knowledge, sample-efficient optimisation with PT surrogates. 

\paragraph{Contribution I: Drawbacks of PTs when applied to BO.} As a first contribution, we realise that the method in \citep{2021_Muller} is challenged by two properties commonly observed during BO. Being trained on uniformly spaced data, PTs struggle to make predictions when non-uniformly distributed observations are available; a common situation faced in BO. We also notice that those transformers require many observations in novel regions before adapting their posterior prediction. Such behaviour is detrimental in BO as it can easily increase sample complexity, requiring lots of black-box function evaluations. 

\paragraph{Contribution II: Non-uniform priors and novel regularisation.} Having identified these problems, we then contribute by proposing: 1) a simple yet effective non-uniform sampling procedure, and 2) a novel regulariser that trades-off accuracy and sensitivity during pre-training (Section \ref{sec:pt4bo}). Our non-uniform sampler emphasises important cases for prediction by weighting sampling probabilities in accordance to target values. The regulariser, furthermore, induces a stationarity structure to enable better generalisation and faster adaptation to novel input locations.  

\paragraph{Contribution III: One randomly pre-trained PT can do sample-efficient optimisation.} As a third contribution of our work, we demonstrate that a transformer pre-trained with our components on data collected from \emph{random hyper-priors} can optimise a variety of benchmark black-box functions, \emph{without the need for any fine-tuning or re-training}. Results on 38 standard BO benchmark tasks indicate that our method is indeed sample efficient in that it recovers or outperforms GPs, while being an order of magnitude faster in wall-clock time. Additionally, we demonstrate that our contributions to the pre-training of PTs lead to significant improvements in BO performance (e.g., going from 36\% improvement over random search to 72.1\% on 10 dimensional tasks).

%in this paper, we demonstrate that the immediate replacement of GPs with the proposed transformer model as a surrogate in BO is hindered by two challenges. First, transformer models in \citep{2021_Muller} are trained on datasets associated with priors where input locations are sampled uniformly in the search domain, i.e. following purely exploratory strategy, while in BO input locations are sampled by balancing between exploitation and exploration. Second, we observe that some of the trained transformer models tend to be insensitive to new observations and, as a result, require large number of samples for an accurate inference.     

%We address both of these issues by proposing an effective training procedure for probabilistic transformers (PT). First, we ensure our sampling strategy for input locations to be both exploitative and explorative by sampling points in the training datasets for priors  proportionally to function values associated with these points. As a result, constructed training dataset stochastically replicates the decreasing trend of output values and resembles a sequence of input queries inherent to BO. Next, to motivate sensitivity of transformer models we propose a trade-off between the accuracy of predicted posteriors and their similarity on close input locations by introducing a special regularisation term during training.   

%In a series on numerical experiments with various black box objective we demonstrate 

\section{Background \& related work}\label{sec:background_and_related_work}
\subsection{Bayesian optimisation}\label{subsec:background:bo}
In BO \citep{1964_Kushner, 1978_Mockus, 1975_Zhilinskas, 2010_Brochu, 2020_Grosnit, 2012_Snoek}, we optimise a function $f(\cdot)$ solving: 
\begin{equation}
    \label{eq:bo}
    \bm{x}^{\star} = \arg\min_{\bm{x} \in \mathcal{X}} f(\bm{x}),
\end{equation}
where $f :\mathcal{X} \rightarrow\mathbb{R}$ is a black-box function over a domain $\mathcal{X} \subset \mathbb{R}^d$. 
BO is an especially attractive methodology when data efficiency is important, for example when optimising functions that are expensive to evaluate, functions over which we have no prior knowledge, and/or functions with no available gradients. 
There are two core components in BO; a probabilistic surrogate model and an acquisition function that trades-off exploration and exploitation to determine the next evaluation query $\bm{x}^{\text{new}} \in \mathcal{X}$. Among many possible surrogates, GPs \citep{2006_Rasmussen, 2009_Osborne} are a popular alternative as they are data-efficient and provide calibrated prediction uncertainties. Moreover, many acquisition functions with varying degrees of success have been proposed in the literature \citep{1978_Mockus, 1998_Jones, 2010_Srinivas, 1964_Kushner, 2016_Gonzlez}. In this paper, we utilise expected improvement (EI) acquisitions \citep{1978_Mockus, 1998_Jones}, but note that our framework is compatible with any other from the literature \citep{2010_Brochu, 2009_Osborne}. 

Although successful in many instances \citep{2012_Snoek, 2020_Griffiths, 2021_Turner, AntBO2022}, the increasing complexity of the application domains targeted with BO has motivated researchers to propose deep-learning-based probabilistic surrogates and novel BO forms which we briefly review next. 

\subsection{Related work: deep-surrogates \& non-standard BO forms}
We first consider the
use of deep neural networks (DNN) in GP design to tackle specific BO modelling problems (heteroscedasticity, structured search spaces, etc.). Then, we give an overview of the different methods that have been explored for deep networks to perform Bayesian inference. Finally, we discuss the use of those models as probabilistic surrogates in BO. 
% While GPs are widely used in practice for their calibrated uncertainty and data-efficiency, other models based on (deep) neural architectures have been recently  developed to imitate Bayesian inference. It is notably the case of the  probabilistic transformer \citep{2021_Muller} (PT) upon which we build.

\paragraph{Non-standard GPs in BO.}
While standard GPs are widely used in practice for their calibrated uncertainty and data-efficiency, there are applications calling for more flexible or scalable models. Standard GP surrogate assumes that the target function $f$ has stationary variations,  
%, and that the value $y$ observed when querying $f$ at any input $\bm{x}$ is perturbed by a fixed-level Gaussian noise, i.e. $y = f(\bm{x}) + \epsilon$ with $\epsilon \sim \mathcal{N}(0, \sigma^2)$. 
therefore an important line of research consists in adapting GPs to functions with non-stationary variations, or functions observed with a non-homogeneous noise (e.g., heteroscedastic noise). %when $\sigma^2$ depends on $\bm{x}$). 
To tackle the variability of the noise scale, an extra GP can be used, forming a heteroscedastic model \citep{2011_azaro, 2012_Wang, 2017_Calandra}. Other approaches to cope with general model misspecifications consist in learning a transformation of the GP inputs and/or outputs along with the usual GP hyperparameters. These transformations can be as simple as box-cox transformation \citep{1964_Box, HeboPaper}, or obtained through a cascade of GPs leading to a deep-GPs \citep{2013_Damianou}. Neural networks have also been considered to enhance modelling flexibility, for instance by using normalising flows to transform the GP likelihood \citep{2021_maronas}. % [normalizing flows, Deep input warping] .
Moreover, when optimising a black-box function defined over structured search space, GP can still be a valid model either by using a specific kernel adapted to the structured inputs \citep{moss2020boss}, or by getting a continuous latent representation of the inputs, e.g. with a variational-autoencoder \citep{2014_Kingma}, and by running BO in the latent space \citep{2016_Gomez, 2020_Griffiths, 2020_Tripp, 2020_Grosnit}. 
% The latter was successfully applied to optimise molecules \citep{2020_Griffiths, 2020_Grosnit}, strings \citep{} or images \citep{}.
% ,whose parameters can be trained either along with the GP hyperparameters (GPLVM \citep{gplvm}) or separately.
% Specific priors can also be included into the GP modelling \citep{knowing_what_but_not_where,aerodynamic_design_opt}
% When the input space is high-dimensional .
On the scalability side, escaping from the cubic complexity of standard GP inference with respect to the number of observations has been an active line of research, leading to the development of sparse GPs  \citep{2005_Snelson, 2011_lazaro, 2013_Hensman, 2015_Wilson}.
% 2016_Bauer, 2015_Wilson}.
Instead of performing exact Bayesian inference, sparse GPs rely on pseudo-input-output points, known as inducing points, and approximate the model posterior through variational inference.  

\paragraph{Neural Bayesian models.}
Efforts have been made to benefit from the high flexibility and scalability of DNNs without suffering from their difficulty to provide reliable uncertainty estimations. 
In \citep{2015_Snoek}, the penultimate layer of a fully-connected DNN trained to make deterministic predictions is used to provide the basis functions of a Bayesian linear regression (BLR). The Bayesian modelling exploits features extracted from the frozen penultimate layer for uncertainty estimation. Of course, the quality of the estimation is limited by the fact that most of the DNN is fully deterministic. One way to extend Bayesian inference to the entire DNN consists in imposing a parametrised prior distribution over the weights and activations of the network arriving at Bayesian neural networks (BNNs) \citep{1997_Bishop,1992_Neal,2015_Blundell,2020_Jospin}.
By using a stochastic model and performing Bayesian inference through MCMC sampling or with a variational inference approximation, BNNs can provide uncertainty estimations and prevent overfitting. 
The dropout regularisation \citep{2012_Hinton, 2014_Srivastava}, commonly applied during DNNs training, is also used to obtain distributions for the output predictions at test time, in a way that can be framed as Bayesian inference \citep{2016_Gal}. 
% In its simplest form, it consists in randomly deactivating network weights by multiplying them with a Bernoulli noise during feed-forward. 
% Therefore, multiple feed-forwards on the same inputs provide a distribution of outputs, which is the same as what a variational inference procedure would produce, making the link with BNNs \citep{2016_Gal}.
% Dropout can be interpreted as exploiting a set of DNNs, making a link with 
Similarly, deep ensembles (DEs) \citep{2017_Lakshminarayanan, 2019_Fort}, which make predictions using a set of DNNs trained from different initial parameters, are also performing some form of Bayesian inference \citep{2020_Wilson, 2021_Angelo}. 
Even though DEs often outperform BNNs and dropout regularisation \citep{2020_Caldeira, 2020_Scalia, 2020_Gustafsson}, recent works \citep{2021_Rahaman, 2020_Ashukha} show that, especially in low-data regime, DEs still suffer from under-calibration.
Neural Processes (NPs) \citep{2018_Garnelo, 2018_Garnelo_a} are another popular DNN-based model adopting the Bayesian framework. Combining the uncertainty estimations from GPs and the scalability of neural networks, they implicitly learn a kernel function from the observed data, allowing to predict output distributions at test points, given context points. \citet{2018_kim} observes that NPs tend to underfit at context points, which they remedy by introducing an attention mechanism, which improves the fit as each prediction is made by focusing on more relevant context points. 
Attention mechanism is also exploited by the PT model designed by \citet{2021_Muller}, which shows improved Bayesian inference ability compared to Attentive NPs. We give more details on PT in Section~\ref{subsec:background:pt} as it is the model we build upon.

\paragraph{Neural Bayesian models for BO.}
Most of the methods using DNNs to perform Bayesian inference have been chosen as surrogate in the context of BO. \citet{2019_Ma} adopt Graph-BNN as a surrogate model to run BO on a neural architecture search problem involving a structured search space. GP scalability issues have motivated the use of DEs \citep{2021_Lim, 2021_Gruver}, of BLR-augmented neural networks  \citep{2012_Snoek}, or of BNNs \citep{2016_Springenberg}. \citet{2016_Springenberg} directly trains a BNN using a robust stochastic gradient Hamiltonian Monte Carlo (BOHAMIANN) that allows for better posterior sampling and acquisition function estimation.
NPs have also been considered in the context of BO \citep{2021_Shangguan, 2020_Luo, 2021_Chakrabarty}.
\citet{2021_Shangguan} take advantage of the scalability and modelling performance of standard NPs, performing competitively compared to GP-based BO as well as BOHAMIANN. 
Finally, conditional and attentive NPs \citep{2020_Luo,2021_Chakrabarty} have respectively shown promising results on multi-task BO and batch-BO.
To the best of our knowledge, we are the first to attempt adapting PT to the BO framework.

% BO with NNs ensembles \citet{2021_Lim} (GP finds better optimum at the end) \citet{2021_Gruver} (CNN ensembles outperform GPs - structured inputs, 500 training points)
% BO with one linear Bayesian layer on top of a NN \citet{2012_Snoek}
% BO with BNN \citet{2016_Springenberg} (solve scalability issue of GPs), with Graph-BNN \citet{2019_Ma} (tackle structured space problem)
% BO with Neural Processes \citep{2021_Shangguan} (competitive against GP )  \citet{2021_Chakrabarty} (ANP-BBO, batch BO beating sparse GP), \citet{2020_Luo} (conditional NP on multitask BO competitive to BO only do small D - strictly less than 5)
% All these necessitate fitting the model parameters and hyperparams to the real data. 
% --> to the best of our knowledge, we are the first to consider probabilistic transformer as surrogate for BO
% Adapting scalable Bayesian models for BO has also been an active area of research. 
% In \citep{2012_Snoek}, the authors train a deterministic neural network while replacing the last layer by a Bayesian linear model to predict outputs with uncertainty. BOHAMIANN \citep{2016_Springenberg}, on the other hand, directly trains a BNN using a robust stochastic gradient Hamiltonian Monte Carlo that allows for better posterior sampling and acquisition function estimation. NPBO \citep{2021_Shangguan} takes advantage of the scalability and modelling performance of NPs in BO. Experiments show that NPBO  performs competitively compared to GP-based BO as well as DNGO and BOHAMIANN.

\paragraph{Conclusions from previous literature and scope of our paper.}
In this paper, we aim at tackling standard black-box optimisation problems with a Bayesian neural network model, and leave the more exotic BO forms for later work. The choice of PT as our base DNN comes from its potential ability to perform Bayesian inference without needing to be fine-tuned on black-box observations. However, this general ability was demonstrated in an ideal case and therefore may not be straightforwardly extendable to the BO scenario, as we show in Section \ref{sec:drawbacks}.

\subsection{Probabilistic transformers}\label{subsec:background:pt}
Transformer models \citep{2017_Vaswani} have achieved a state-of-the-art modelling performance in a variety of supervised machine learning tasks including but not limited to, computer vision \citep{2020_Dosovitskiy,2020_Touvron,2019_Ye,2019_Sun}, machine translation \citep{2020_Radford,2020_Yang,2018_Radford,2019_Liu}. 
%Transformer's architecture composed from multiple self-attention mechanisms stacked consequently via residual connections \citep{Transformer_paper,7780459}. Order relevant information is incorporated in Transformer models via positional encoding mechanism \citep{Posit_encoding_paper}. 
% Recently, aiming to account for the uncertainty of transformer models, researchers have been focusing on Bayesian inference based approaches \citep{2021_Muller,2021_Xue,2020_Hendrycks,2018_Tran}. 
Inspired by those successes, recent research focused on  creating transformers capable of modelling uncertainty. Here, we differentiate a handful of approaches \citep{2021_Xue, 2020_Hendrycks, 2018_Tran} designed for specific applications (e.g., natural language processing and understanding). Those methods, however, require large amounts of (labeled) training data and, as such, can incur high sample complexity when used in conjunction with BO - the focus of our paper.

In their seminal work in \citep{2021_Muller}, the authors focus on learning to approximate Bayesian posteriors while requiring \emph{no task-specific data}, i.e., data from the black-box under optimisation in our case. While the scope of applications of this model reaches far-beyond regression, the authors demonstrate that those transformers efficiently approximate GP posteriors in the context of \emph{standard} regression tasks. Hence, we believe that such a model  constitutes a promising direction for sample efficient optimisation. 

Attempting to directly port the work in \citep{2021_Muller} to BO, we realised two drawbacks stemming from the exploratory nature of the optimisation procedure. Before detailing those shortcomings in Section~\ref{sec:drawbacks}, we first present the PT needed for the remainder of the paper. 
% \citet{2021_Muller} focuses on Bayesian posterior prediction for supervised learning. 
%While the scope of applications of this model reaches far beyond regression, %the authors show that their architecture 
%it efficiently achieves GP posterior approximation. 
%Nonetheless, not being specifically designed for downstream use in optimisation, this model suffers two main issues, undermining its direct use in the BO framework.
%We discuss these drawbacks and our proposed alternatives in following sections but first describe the proposed approach in \citep{2021_Muller}.
%In \citep{2021_Muller}, the suggested training method repeatedly draws a labeled dataset $\mathcal{D}^{(i)}$ from a \textit{dataset prior} $p(\mathcal{D})$. 

\paragraph{Transformers can do Bayesian inference.} 
\citet{2021_Muller} introduce a transformer architecture $q_{\bm{\phi}}(\cdot)$ trained to approximate posterior distributions.
Given a set of $n_{\text{obs}}$ observations $D_{\text{obs}}=\{\bm{x}^{\text{obs}}_{i}, y^{\text{obs}}_i\}_{i=1}^{n_{\text{obs}}}$ and $n_{\text{pred}}$ locations at which we desire to make predictions $ \bm{x}^{\text{pred}}_{1:n_{\text{pred}}}$, the transformer outputs a distribution that approximates the true posterior through a forward pass. To yield competitive results with deep-networks, \citet{2021_Muller} adapt output-domain discretisation \citep{bellemare2017distributional} yielding Riemann (or bar-plot pdf) distributions. Precisely, given a set of $n_{\text{buckets}}$ ``buckets'', $\bm{b} = \{b_l\}_{l=1}^{n_{\text{buckets}}}$, discretising the output domain, the transformer learns to classify  $y^{\text{pred}}_j$ belonging to a bucket $b_l$, i.e., $\mathbb{P}_{\bm{\phi}}(y^{\text{pred}}_j \in b_l)$ for all $l\in[1:n_{\text{buckets}}]$.  

%--------------------------------------
%Riemann Distribution The modelling of continuous distributions is hard with neural networks. To
%yield strong performance in modelling PPDs, we used a distribution that works particularly well
%with neural networks. Based on the knowledge that neural networks work well for classification
%and inspired by discretizations in distributional reinforcement learning (Bellemare et al., 2017), we
%made use of a discretized continuous distribution that we call Riemann Distribution. PDFs of this
%distribution are bar plots, see Figure 2b. We select the boundaries of the buckets B in which we
%discretize such that they all have the same probability in prior-data: p(y ∈ b) = 1/|B|, ∀b ∈ B. We
%estimate this using a large prior-data sample. In experiments with priors that need unbounded support,
%we replace the last bar on each side with an appropriately scaled half-normal distribution, see Figure
%2c. For a more exact definition, we refer the reader to Appendix E.2. For an ablation of the Riemann
%distribution, see Appendix E.3.
%---------------------------------------

To train the transformer, artificially labeled datasets $D=\{(\bm{x},\bm{y})\}$ are repeatedly sampled from a \textit{dataset prior} $D \sim p(\mathcal{D})$, e.g., given by GPs or BNNs \citep{2021_Muller}. In this paper, we consider $p(\mathcal{D})$ to be given by a hyper-GP with priors over the kernel hyperparameters. Specifically, we utilise  GPs with zero means and Matern-5/2 kernels.
We then instantiate a GP $f$ with random  kernel parameters $\bm{\theta}$ sampled from a \emph{hyper-prior} $\bm{\theta} \sim p(\bm{\Theta})$. We then form the dataset $D$ by sampling a collection of $N$ inputs in $\mathcal{X} \subseteq \mathbb{R}^d$ and their associated observations $\bm{y}$ from the prior of this specific GP instance such that: $\bm{y}\sim f(\cdot|\bm{x},\bm{\theta})\in\mathbb{R}^N$.
The distribution from which parameters $\bm{\theta}$ are sampled and the kernel of the GPs fully specify the \textit{dataset prior} $p(\mathcal{D})$.

%For a pair $(x^{(i)}_{\text{pred}}, y^{(i)}_{\text{pred}}) \in \mathcal{D}^{(i)}_{\text{pred}}$, the model's approximation of the PPD is
%$$q_{\bm{\phi}}\left(y^{(i)}_{\text{pred}}\big|x^{(i)}_{\text{pred}}, \mathcal{D}^{(i)}_{\text{obs}}\right) \approx p\left(y^{(i)}_{\text{pred}}|x^{(i)}_{\text{pred}}, \mathcal{D}^{(i)}_{\text{obs}}\right)$$
%and the model is trained by \textit{cross-entropy loss} 
% $$\ell^{(i)} = \mathbb{E}_{\mathcal{D}^{(i)}_{\text{obs}}\cup\{(x^{(i)}_{\text{pred}},y^{(i)}_{\text{pred}})\}\sim p(\mathcal{D})}[-\log q_{\bm{\phi}}(y^{(i)}_{\text{pred}}|x^{(i)}_{\text{pred}},\mathcal{D}^{(i)}_{\text{obs}})].$$
% $$\ell^{(i)} = \frac{1}{|\mathcal{D}^{(i)}_{\text{pred}}|}\sum_{t=1}^{|\mathcal{D}^{(i)}_{\text{pred}}|} \left[-y^{(i)}_{\text{pred},t} \log q_{\bm{\phi}}(y^{(i)}_{\text{pred},t}|x^{(i)}_{\text{pred},t},\mathcal{D}^{(i)}_{\text{obs}})\right].$$
%$$\ell(x^{(i)}_{\text{pred}}, y^{(i)}_{\text{pred}}) = -y^{(i)}_{\text{pred}} \log q_{\bm{\phi}}(y^{(i)}_{\text{pred}}|x^{(i)}_{\text{pred}},\mathcal{D}^{(i)}_{\text{obs}}).$$
We then randomly split the data in $\mathcal{D}$ into ${D}_{\text{obs}}$ and points to be predicted $\langle \bm{x}_{1:n_{\text{pred}}}^{\text{pred}}, y_{1:n_{\text{pred}}}^{\text{pred}} \rangle$ with $y_{1:n_{\text{pred}}}^{\text{pred}}$ being the true targets. Following \citep{2021_Muller}, we learn the parameters of the transformer $\bm{\phi}$ by minimising the cross-entropy (CE) loss, which is defined for \emph{one} prediction data point as $\langle \bm{x}_j^{\text{pred}}, y_{j}^{\text{pred}}\rangle$: 
\begin{equation}
\label{eq:ce_loss}
    \mathcal{J}(\bm{\phi}) = \mathbb{E}_{(\langle \bm{x}_j^{\text{pred}}, y_{j}^{\text{pred}}\rangle, {D}_{\text{obs}})\sim p(\mathcal{D})}[-\log q_{\bm{\phi}}(\cdot|\bm{x}_j^{\text{pred}}, D_{\text{obs}})].
\end{equation}

%The transformer $q_{\bm \phi}$ predicts a histogram composed of buckets that discretise the target domain.
%It is trained by minimising the expected value of the \textit{cross-entropy loss} (CE) over dataset $D \sim p(\mathcal{D})$, which is randomly split into \textit{observed} points $D_{\text{obs}}$ and \textit{predicted} points $D_{\text{pred}}$
%\begin{equation}\label{eq:ce_loss}
%    \ell(\bm\phi) = \mathbb{E}_{(\bm{x}_{\text{pred}}, \bm{y}_{\text{pred}}, D_{\text{obs}}) \sim p(\mathcal{D})} \left[ - \log q_{\bm{\phi}}(\bm{b}_{\text{pred}} \big|\bm{x}_{\text{pred}},D_{\text{obs}}) \right],
%\end{equation}
%where $\bm{x}_{\text{pred}}$ and $\bm{y}_{\text{pred}}$ are points in $D_{\text{pred}}$ and $\bm{b}_{\text{pred}}$ are the buckets associated to each $\bm{y}_{\text{pred}}$.
%By doing so, the transformer approximates the Posterior Predictive Distribution (PDD) $q_{\bm{\phi}}(\bm y^{(i)}_{\text{pred}}|\bm x^{(i)}_{\text{pred}}, \mathcal{D}^{(i)}_{\text{obs}}) \approx p(\bm y^{(i)}_{\text{pred}}|\bm x^{(i)}_{\text{pred}}, \mathcal{D}^{(i)}_{\text{obs}})$.

Once we solve the problem in Equation \ref{eq:ce_loss}, we can easily infer predictive distributions on observations of interest (e.g., the true black-box in our case) by executing a forward pass in the network.  %inference on a set of test locations using a simple forward pass to 
%$\bm x_\text{test}$ given observations $D$ is  $q_{\bm{\phi}^*}(\cdot|\bm x_\text{test}, D)$ which is executed by a simple forward pass in the network. 
%By training over many sampled datasets, the Probabilistic Transformer is able to efficiently learn to approximate posterior predictive distributions for a given prior $p(\mathcal{D})$, and in particular to exhibit excellent results for GP posterior approximation. 
%This paves the way for using it in domains where a probabilistic surrogate is needed. 
%Although this model presents an interesting surrogate for BO,   
%In this work, as we explore its use for BO specifically, we observe some limitations which can prevent its direct use.
Next, we describe two shortcomings of the method in \citep{2021_Muller} prohibiting its direct application to BO. We then elaborate potential resolutions in Section \ref{sec:pt4bo}. 

%in detail in Section \ref{sec:drawbacks} and suggest some points to tackle them in Section \ref{sec:pt4bo}.

%and the Transformer model $q_{\bm{\phi}}(\cdot)$ is trained by minimising the prior-data negative log-likelihood objective $\mathcal{L}_{\bm{\phi}} = \mathbb{E}_{\mathcal{D}_{\text{train}}\cup\{\bm{x},y\}\sim p(\mathcal{D})}[-\log q_{\bm{\phi}}(y|\bm{x},\mathcal{D}_{\text{train}})]$. Such choice of objective function comes from the observation that $\mathcal{L}_{\bm{\phi}}$ is the expectation of the KL-divergence between the actual posterior predictive distribution and its approximation $q_{\bm{\phi}}()$. To assure the output of the model $q_{\bm{\phi}}()$ is not affected by the permutation of examples in $\mathcal{D}_{\text{train}}$, authors remove positional encoding and, instead, feed the sum of encoded examples $(\bm{x},y)\in\mathcal{D}_{\text{train}}$ to the model $q_{\bm{\phi}}(\cdot)$. The suggested method is capable to approximate the posterior prior distribution of GPs and 

\section{Drawbacks of probabilistic transformers for Bayesian optimisation}\label{sec:drawbacks}
One can reasonably wonder if the model described in Section \ref{subsec:background:pt} model could be applied as is in situations where a GP is usually the model of choice, e.g., in BO.
In this section, we make two observations that, we argue, are drawbacks of the approach of \citet{2021_Muller} when applied to sample-efficient optimisation.

\subsection{Uniformly distributed inputs are unrealistic in BO settings}\label{subsec:drawbacks:non_uniform}
We first notice that the PT is trained on datasets $D$ which have uniformly distributed input locations. Precisely, we create $D$ by uniformly sampling $\bm{x}$ in $\mathcal{X}$ and then sampling $\bm y$ from a GP prior $\bm{y}\sim f(\cdot|\bm{x},\bm{\theta})$.
This creates two problems considering the application to BO. 

First, the PT is trained on a space that is uniformly covered causing the predicted locations to be also uniformly distributed in $\mathcal{X}$.
Second, the conditioning locations (i.e. the observed points used to condition the predictions) are also uniformly distributed in $\mathcal{X}$, and thus uniformly close to the predicted ones.

However, these two situations hardly ever occur during BO.
In fact, as the search alternates between exploration and exploitation, the surrogate model is used to make predictions at locations in $\mathcal{X}$ that are either far away from the conditioning points (exploration step) or close to them (exploitation step).
\begin{figure}[htp]
\centering
\includegraphics[width=1.0\textwidth]{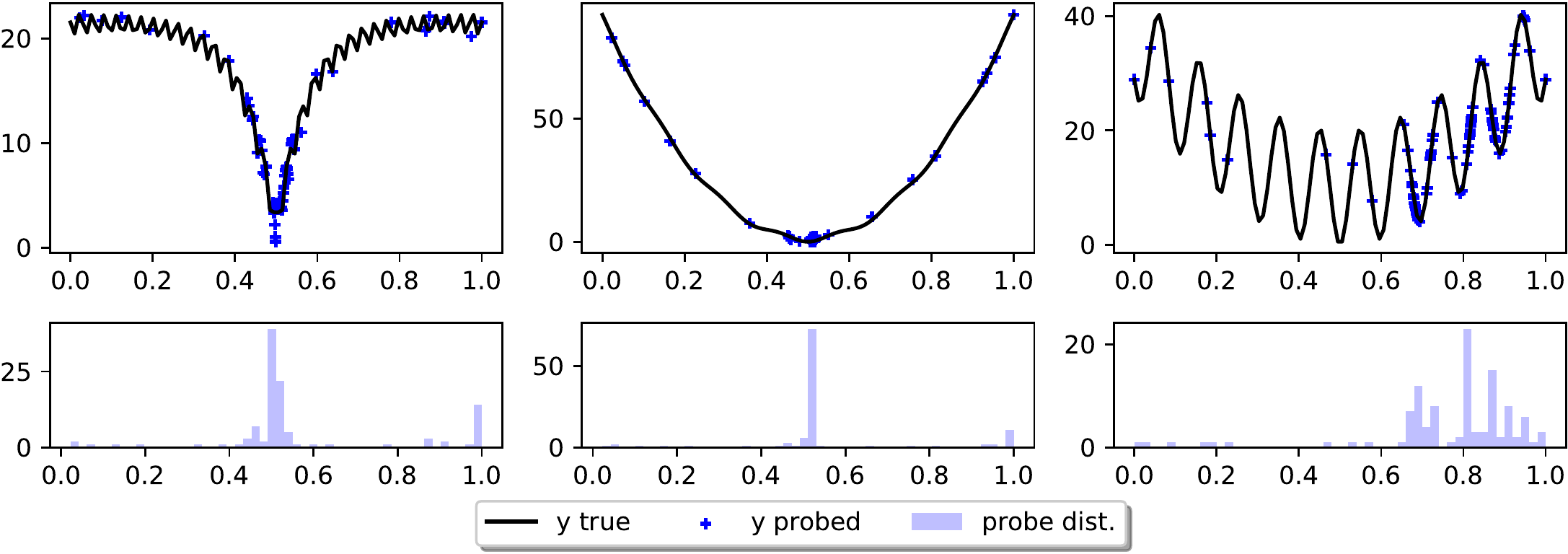}
\caption{GP predictions and BO run probed point distributions on Ackley, Griewank and Rastrigin in 1 dimension. Clearly, probed points during BO are far from being uniform in the search space.}
\label{fig:non_unif_probes}
\end{figure}
Providing credence to our argument, Figure \ref{fig:non_unif_probes} illustrates a real GP-based BO run on the Ackley, Griewank and Rastrigin functions in a 1 dimension search space.
The BO is run for 100 steps starting from 10 initial random points and the GP is retrained at each iteration.
%Additionally we use EI acquisition function to guide the search.
The top part of each plot in Figure \ref{fig:non_unif_probes} shows the true function and the points acquired by the model during the optimisation.
The bottom part of each plot shows the distribution of input locations probed (i.e., evaluated in the black-box objective) during the BO search. 
% The top part shows, after the BO search, the predictive distributions for the PT and the GP, made at uniformly spaced points and conditioned on the observations made throughout the BO loop. 
% It also shows, for reference, the true function value at the predicted locations. 
It is clear from these plots that the distribution of probed points is non-uniform over the domain $\mathcal{X}$. 
Indeed, we observe that a large frequency of queries are made around the location where the model believes the optimiser lies in.
Respectively, points are probed with much lower frequency elsewhere.
% in the interval $[0.4, 0.6]$, i.e., near the optimum of the objective function, and lower frequencies elsewhere. %and, consequently, and a lower frequency elsewhere.
This example supports our observation that PTs trained on uniformly sampled input data are not particularly geared towards non-uniform predictions needed during BO.
% \begin{wrapfigure}[18]{r}{0.45\textwidth}
%     \centering
%     \includegraphics[width=0.45\textwidth]{}
%     \caption{Example of BO run on Ackley in 1 dimension with PT \& GP surrogate predictions (top) and probed points distribution (bottom).}
%     \label{fig:non_unif_probes}
% \end{wrapfigure}
% \begin{figure}[htp]
% \centering
% \includegraphics[width=.33\textwidth]{}\hfill
% \includegraphics[width=.33\textwidth]{}\hfill
% \includegraphics[width=.33\textwidth]{}
% \caption{GP predictions and BO run probed points distributions on Ackley, Levy and Rastrigin in 1 dimension. Clearly, probed points during BO are far from being uniform in the search space.}
% \label{fig:non_unif_probes}
% \end{figure}

\subsection{PTs require many novel observations to adapt posteriors}\label{subsec:drawbacks:sensitivity}
For a given prior distribution $p(\mathcal{D})$, \citet{2021_Muller} showed that the true posterior predictive distribution $p(\cdot | \bm x, D)$ could be obtained under the assumption that the model is expressive enough to capture that true distribution.
However, in practice, it is often not the case and depends on many factors (e.g. how the histograms discretise the target space, how many layers are used, etc.).
Hence, the PT often converges to stationary points which can exhibit undesirable properties for BO.

%We also notice that the PT has difficulties integrating new or isolated observations, due -- at least partly -- to its training procedure.
%Recalling Section \ref{subsec:background:pt}, the model is trained by presenting many examples of datasets sampled from GP priors, approximating posteriors at predicted inputs given some observations, and adjusting its weights by computing gradients from the CE loss providing a measure of accuracy in predictions.
%While reasonable, this training procedure is not tailored for BO-specific use as it is not particularly well-suited to make different predictions with similar conditioning points.

A drawback we noticed that tends to increase the sample complexity in BO is that some of the trained PT models did not exhibit enough sensitivity to new observations as we elaborate next.
%Consider the following example in which we have observed data $D$ and seek to make a prediction at input location $x$, itself located far from observed locations in $D$. 
%Consider a second case in which we want to make a prediction at the exact same location $x$ however we have an additional observation available $D \cup \{(x', y')\}$ to condition our prediction on.
%Moreover, assume this additional observation's input $x'$ is very close to the prediction location $x$.
Consider the following example where we already observed points in $D$ and collect a new observation $(\bm x', y')$, the PT predicts the same posterior around $\bm x'$ \emph{with or without this new observation} even if $\bm{x}^{\prime}$ exhibited large distance to the current observations. 
%We would expect the prediction at $x$ to be better in the second case where we have neighbouring information, i.e. this additional conditioning pair.
%However, the PT predicts sensibly the same posterior in both cases.
%By training the model on a very large variety of examples, it needs a significant number of observations in a particular region before it is able to determine what information gathered from its training to use.
We attribute such a behaviour to the fact that since during training, the model saw a large variety of functions, it needs a significant number of observations in a particular region before it is able to determine which information to use from its training datasets. 

% \begin{wrapfigure}[14]{l}{0.45\textwidth}
%     \centering
%     \includegraphics[width=0.45\textwidth]{}
%     \caption{PT predictive distribution with growing number of observations in unexplored region.}
%     \label{fig:pt_growing}
% \end{wrapfigure}
\begin{figure}[htp]
\centering
\includegraphics[width=1.0\textwidth]{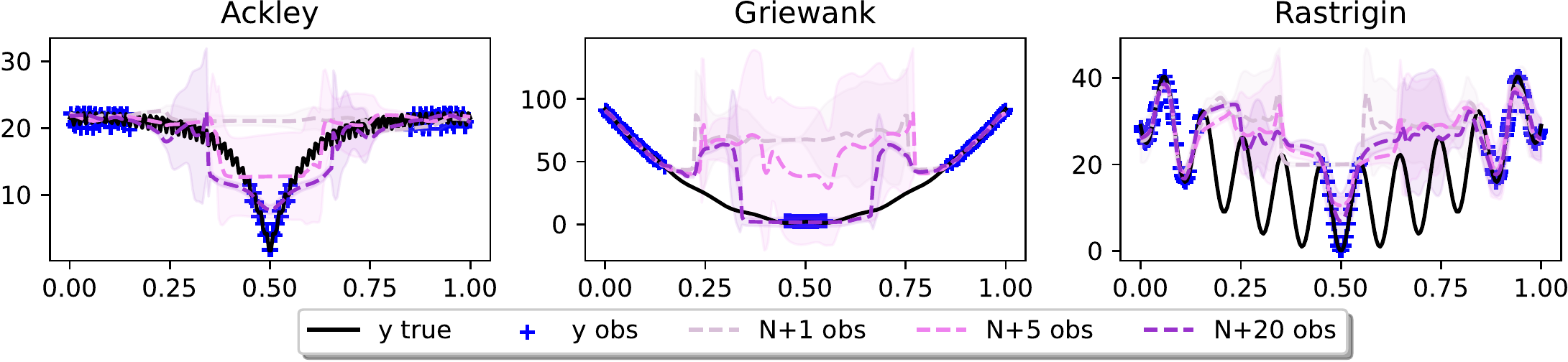}
\caption{PT predictive distribution with growing number of observations in unexplored regions. As we can see, the PT does change its posterior prediction but only upon observing many probes.}
\label{fig:pt_growing}
\end{figure}
It is easy to see why this behaviour is a drawback in a BO setting.
As we gather points in unexplored regions, we need our model to quickly adapt its predictions at the neighbouring locations of the new observations. 
If the PT is not able to do so without acquiring a significant number of observations, it will consequently impact the sample-efficiency of the optimisation procedure.
% Indeed, during the BO search, it is often the case that new acquired points have a significantly different true value from previous ones. 
While a GP would typically be retrained on all points including new observations and adapt its parameters, %the PT cannot afford this luxury and hence needs to be trained in a different way to avoid this problem.
we aim to avoid fine-tuning so to also gain computational-time advantages while keeping sample complexities (almost) unchanged compared to GP-based BO.

% We therefore need the PT to account for these new observations in order to reflect the current available information about the objective.

% predictions with similar observed points considering some points as observations and making posterior predictions at the other, the posteriors predicted at test locations conditioned on similar datasets

% We also realise that the PT is trained by minimising the CE loss (Equation \ref{eq:ce_loss}) between the predicted distribution and the true target value, which can lead to a sensitivity issue. 
% Authors in \citet{2021_Muller} note that this loss is equivalent to the KL divergence between the approximated and the true posterior, supporting the fact that if the global loss optimiser $\phi^*$ is found, the PT recovers exactly the GP posterior.
% However in general, we cannot guarantee that $\phi^*$ is found and the training yields $\hat{\phi}$, a stationary point in the loss.

% Thus predicted posteriors $q_{\hat{\phi}}$ might give inaccurate predictions or disregard some conditioning points that do not influence the loss function. 
% Instead it might ignore those observations, deeming them as outliers and focusing its attention elsewhere, e.g. in regions where more observations are available.

Figure \ref{fig:pt_growing} illustrates this behaviour with an example. We show the PT posterior distribution on Ackley, Griewank, and Rastrigin in 1 dimension, when given many observations in intervals $[0,0.15]$ and $[0.85,1]$, and relatively few in $[0.45,0.55]$. 
While growing the number of observations in the central interval, we observe a change in the predictions in that same region. 
However, that change occurs only after sufficiently many observations are available in that region, underlying the aforementioned sensitivity issue.

\section{Probabilistic transformer for Bayesian optimisation}\label{sec:pt4bo}
In light of these observations, we propose a simple yet effective training procedure focused on reliably extending the use of PTs to BO. In what comes next, we introduce two components to tackle each of the drawbacks in Section \ref{sec:drawbacks} separately.

\subsection{Non-uniformly distributed inputs prior}\label{subsec:pt4bo:non_unif_prior}
% We first note that during training, the test and training points on which the PT is trained are uniformly sampled over the input space $\mathcal{X}$, which creates two problems considering the application to BO. Firstly, the PT is trained on a space that is uniformly covered. Secondly the test inputs are uniformly distant to the training inputs. Both situations are rarely encountered during a BO search. Indeed, as the optimisation alternates between exploration and exploitation, collected points tend to form clusters in regions where the objective function takes high values while regions of lower values are sparsely populated by isolated points, which typically leads to a non-uniform distribution. Therefore we want to train the PT to make accurate predictions on test points inside a cluster of already observed points to find an even better ones, but also on test points outside of clusters and far from observed data to explore the space. In both cases the data we use to condition our predictions is all but uniformly distributed and even less uniformly distant to the test locations.
%\paragraph{Non-Uniform Sampling \& Non-Uniform Masking}
For the PT to be successful when used as a surrogate model in BO, we want to train on examples that resemble situations faced during optimisation.
As discussed in Section \ref{subsec:drawbacks:non_uniform} and illustrated in Figure \ref{fig:non_unif_probes}, throughout  optimisation, we use the model to make predictions at locations where we believe the posterior yields lower function values compared to points observed so-far\footnote{Of course, if we are interested in maximisation, we would be looking for points with higher function values.}.
Progressively, as the model observes better points and becomes more precise, it probes inputs with better values until it reaches a local optimiser.

With this in mind, we require that the PT perform good predictions especially on points with lower function values, when conditioned on data with (potentially) higher values.

In order to create such examples over which to train, we split the sampled datasets proportionally to its target values.
In particular, after having sampled a dataset $D = \{(\bm{x}, \bm{y})\}$ as in Section \ref{subsec:background:pt}, we split the data into $D_{\text{obs}}$ and $D_{\text{pred}}$ by sampling with probability $\operatorname{softmax}(\bm{y})$ such that each input-target pair $(\bm x_i,y_i) \in D$ is added in $D_{\text{obs}}$ with probability $e^{y_i} / \sum_{n=1}^{N} e^{y_n}$. 

This creates examples in which the predicted locations are distributed proportionally to the target values, i.e., non-uniformly. Additionally, this split presents the model with examples in which it has to make good posterior approximations at locations where the targets are often lower than the available conditioning points, ensuring a closer situation to that faced during BO.

\subsection{Inducing stationarity through regularisation}\label{subsec:pt4bo:reg}
%When making predictions with a GP surrogate, the kernel function explicitly defines the covariance between test inputs and uses it in the prediction itself (see \citep{2006_Rasmussen}). This behaviour is desirable to some extent as we would like to exploit covariance between similar points during prediction, but do not want to suffer from over-smoothing effects. 
As noted in Section \ref{subsec:drawbacks:sensitivity}, stationary points of PTs can sometimes increase sample complexity requiring many observations especially in novel (or exploratory) regions before performing accurate  predictions. To tackle this problem, we propose to regularise the loss function in Equation \ref{eq:ce_loss} such that we encourage desirable behaviour as we detail next. 

Given our interest in stationary and homoscedastic BO scenarios (Section \ref{sec:background_and_related_work}) - the gold standard for GP-based BO - we notice a common kernel stationarity assumption in the GP literature \citep{2006_Rasmussen, 2012_Snoek}. If augmented with that property, PTs could help in making better predictions at low sample complexities. Specifically, stationarity implies that if two input probes $\bm{x}$ and $\bm{x}^{\prime}$ are close in norm, their predictive posterior distribution is also similar -- see Appendix \ref{app:proof_rasul} for a formal derivation of this property. Not only does this property ease the search in the space of kernel hyperparameter but also induces a structure that aids GPs in generalisation \citep{2006_Rasmussen}.   

To promote a similar behaviour in PTs enabling an increased sensitivity to exploratory regions during BO, we propose to trade-off accuracy versus posterior similarity between neighbouring points through regularisation. To elaborate, consider an input point $\bm{x}_{j}^{\text{pred}}$ on which we wish to make predictions. Given $\bm{x}_{j}^{\text{pred}}$, we define a ball centered at $\bm{x}_{j}^{\text{pred}}$ of radius $\epsilon$ such that $\mathcal{B}_{\epsilon}(\bm{x}^{\text{pred}}_{j}) = \{\bm{x}^{\text{pred}}_{i}| d(\bm{x}_j^{\text{pred}},\bm{x}_{i}^{\text{pred}}) \leq \epsilon\}$ for all $i\in[1:n_{\text{pred}}]$ and $d(\cdot,\cdot)$ being a relevant distance measure. Given that the points in the ball lie within a distance $\epsilon$ from $\bm{x}_{j}^{\text{pred}}$, we regularise the parameters to yield similar posterior predictions: 
\begin{equation}
\label{eq:reg}
    \mathcal{R}(\bm{\phi}; \bm{x}_{j}^{\text{pred}}) = \sum_{\bm{x}_{i}^{\text{pred}} \in \mathcal{B}_{\epsilon}(\bm{x}_{j}^{\text{pred}})}\omega(\bm{x}_{j}^{\text{pred}}, \bm{x}_{i}^{\text{pred}}) \text{KL}(q_{\bm{\phi}}(\cdot|\bm{x}_{j}^{\text{pred}}, D_{\text{obs}})||q_{\bm{\phi}}(\cdot|\bm{x}_{i}^{\text{pred}}, D_{\text{obs}})),
\end{equation}
with $\text{KL}(p||q)$ being the Kullback–Leibler divergence between the distributions $p$ and $q$. Additionally to the $\text{KL}(\cdot||\cdot)$, Equation \ref{eq:reg} also introduces the factor $\omega(\bm{x}_{j}^{\text{pred}}, \bm{x}_{i}^{\text{pred}})$ weighing the contribution of each $\bm{x}_{i}^{\text{pred}} \in \mathcal{B}_{\epsilon}(\bm{x}_{j}^{\text{pred}})$ based on its relative distance to $\bm{x}_{j}^{\text{pred}}$ defined as follows: 
\begin{equation*}
    \omega(\bm{x}_{j}^{\text{pred}}, \bm{x}_{i}^{\text{pred}}) = \text{ReLU}(1 - d(\bm{x}_{j}^{\text{pred}}, \bm{x}_{i}^{\text{pred}})/\epsilon),
\end{equation*}
with $\text{ReLU}(a) = \max\{0,a\}$. Clearly, this term emphasises closer points to $\bm{x}_{j}^{\text{pred}}$ in the regulariser since as $\bm{x}_{i}^{\text{pred}}\rightarrow \bm{x}_{j}^{\text{pred}}$, $\omega(\bm{x}_{j}^{\text{pred}}, \bm{x}_{i}^{\text{pred}}) \rightarrow 1$. Of course, such an effective is also in agreement with stationary GP assumptions, whereby the closer the points get so does their posteriors.

The regulariser in Equation \ref{eq:reg} is added to the cross entropy loss from Equation \ref{eq:ce_loss} forming a better behaving optimisation problem that promotes more sensitivity to new observations. That is since when a new point $\bm x'$ is collected far from the current observations, the prediction around $\bm x'$ will weight $\bm x'$ more than distant observations. Of course, our regularisation introduces a new hyperparameter $\epsilon$ that is ultimately application dependent. To shed-light on its effect and provide insights that can help practitioners, we conduct an ablation study in Appendix \ref{app:ablation}.

\section{Bayesian optimisation experiments and results}\label{sec:exp}
To assess the quality of our approach, we focus our experimental analysis on standard BO benchmarks and consider 16 functions implemented in the \texttt{BoTorch} \citep{2020_Balandat} suite. 
These well-known objective functions are commonly used in the BO literature  \citep{2012_Snoek,2016_Springenberg,2021_Shangguan,2021_Astudillo_a,2021_Astudillo_b,2021_Moss,2021_Eriksson,2020_Letham,2020_Jiang} as they provide a large, diverse and challenging source of optimisation problems. 
Since GP-based BO is the gold standard and is usually still the best algorithm across all tasks (see Section \ref{sec:exp}) from this panel, we compare our regret results to that classical benchmark. It is worth noting that we also assess the importance of each of the contributions in Section \ref{sec:pt4bo}separately on the surrogate's performance during the BO runs across various black-boxes and dimensions. Our results indicate significant accuracy improvements when compared to standard PTs. Due to space constraints, however, we report some of those findings in Figure \ref{fig:results}, the rest being displayed in Appendix \ref{app:visualExps}.  

\paragraph{Experimental details.}
We run four sets of experiments in 1, 2, 5, and 10 dimensions.
For each dimension we train a variation of the PT to study the contribution of each of the introduced components (see Table \ref{tab:notation}).
In all dimensions, we train the original PT, the PT with the non-uniform prior, the PT with regularisation and the PT with both components.
We then use this \emph{one} pre-trained model on all black-boxes in that dimension and report simple regret results.
\begin{table}[htp]
\caption{Algorithms notation.}
\label{tab:notation}
    \centering
    \begin{tabular}{l|l}
        Notation & Components \\
        \hline
        RS & Random Search \\
        % GP & Gaussian Process BO with local search \\
        GP & Gaussian Process BO \\
        PT & \citep{2021_Muller} unchanged \\
        PT-$\mathcal{R}_{\varepsilon}$ & PT with Regularised Training (Equation \ref{eq:reg}) \\
        PT-$\nu$ & PT with Non-Unif. Inputs Prior \\
        PT-$\nu\mathcal{R}_{\varepsilon}$ & PT with Non-Unif. Inputs Prior \& with Regularised Training \\
    \end{tabular}
\end{table}

\begin{figure}[htp]
    \centering
    \includegraphics[width=1.0\textwidth]{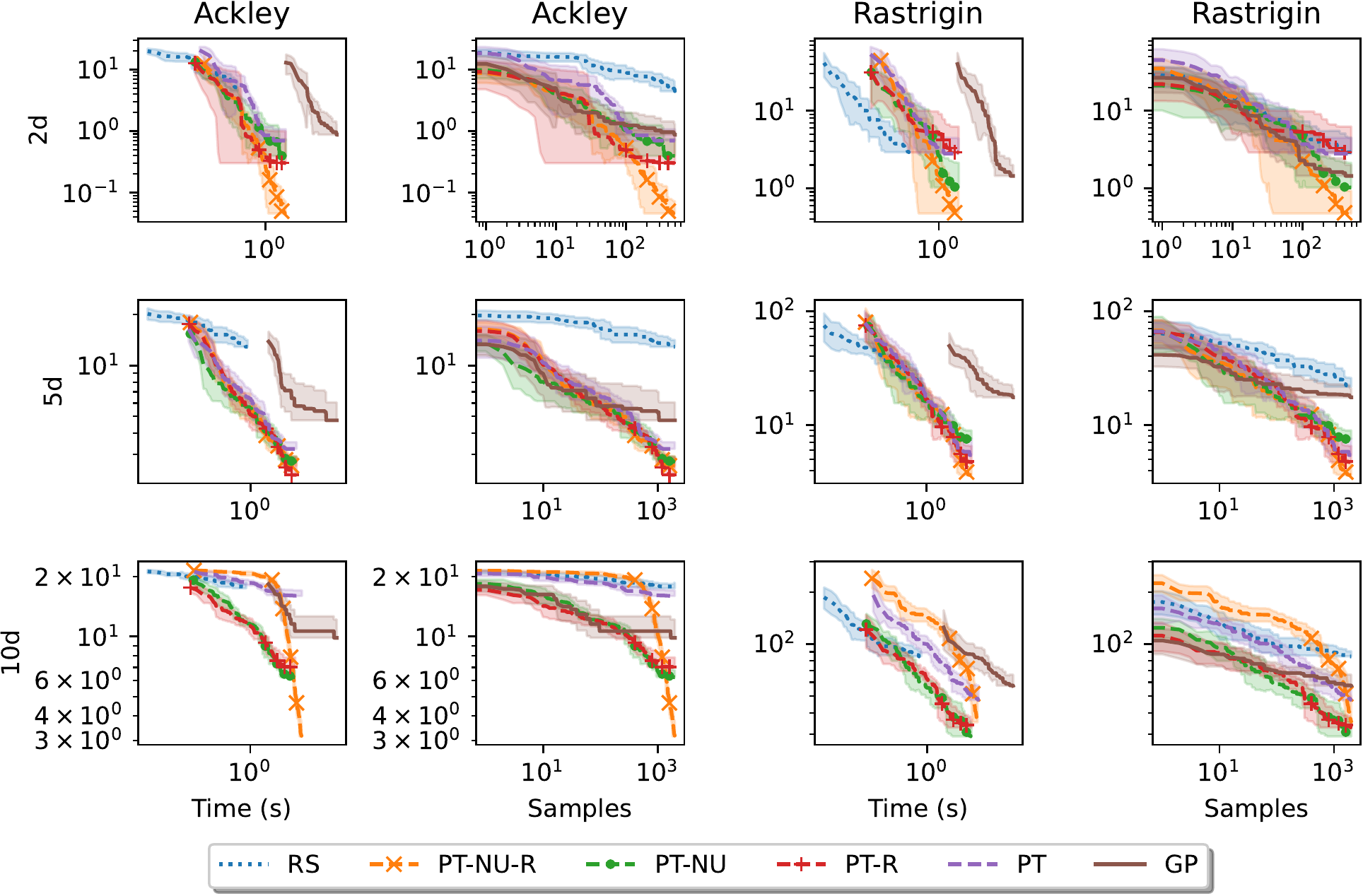}
    \caption{Simple Regret on Ackley and Rastrigin in 2, 5 and 10 dimensions for RS, PT, PT-$\nu$, PT-$\nu\mathcal{R}_{\varepsilon}$ and GP. Results are shown in log-scale and versus wall-clock time (first and third column) as well as versus iteration numbers (second and fourth column).}
    \label{fig:results}
\end{figure}

%To run the BO for a specific objective and model, we start by sampling $n_{\text{init}}$ initial points \emph{uniformly at random using a hyper-GP prior} - \emph{no connection to the true black-box} - constituting an initial dataset on which to condition the predictions.
To run the BO for a specific objective and model, we start by sampling $n_{\text{init}}$ initial points uniformly - constituting an initial dataset on which to condition the predictions.
For all experiments, we use Expected Improvement (EI) as our acquisition function.
% We optimise this acquisition function at each step of the BO loop by local search for its rapidity.
% For completeness, we also run the GP-based BO with the standard acquisition optimisation done by gradient descent, noted \textbf{GP}$\nabla$. 
Finally, we report the results in terms of simple regret and run each experiment 10 times for statistical significance.
We detail all hyperparameters related to the model's training (such as learning rate, $\varepsilon$, number of epochs, optimiser, and so-forth) or the BO loop's settings (e.g., the number of steps, $n_{\text{init}}$, etc.) in Appendix \ref{app:add_exp}.

In Figure \ref{fig:results}, we display the simple regret results of the optimisation experiments in 2, 5 and 10 dimensions on arguably the most difficult objective functions of the panel: Ackley and Rastrigin.
For each setting, we show average regret throughout optimisation with 0.5 standard deviation across 10 seeds.
Although we ran experiments on 16 different functions, we showcase those three here as they have been our running examples throughout this work and are arguably among the most difficult objectives of the panel.
Figure \ref{fig:results} shows the same regret results for each task when plotting against the number of function acquisitions (i.e. black-box queries) and against wall-clock time.
This figure clearly shows that the PT trained with our added components is not only a competitive surrogate model but is also an order of magnitude faster than GP-based BO.
Additional time comparison and PT training times are discussed in Appendix \ref{app:time}.

\paragraph{Average results on all benchmarks.} We also report average results on all experiments in Table \ref{tab:average_results}.
A standard way of comparing results across tasks with different scales \citep{2021_Turner} is to transform absolute regret in relative performance to a baseline, i.e. Random Search.
Hence, we report percentage improvement over Random Search for each method, broken down by dimension and averaged over different tasks.
For each dimension, we evaluate each method on all the available tasks counted in the last column of Table \ref{tab:average_results}.
Note that when a method does not perform better than Random Search (which happens for the PT as well as the GP on some tasks) we cap at 0\% improvement.
Table \ref{tab:average_results} reveals that the components introduced in Section \ref{sec:pt4bo} can drastically improve the performance of the BO using the PT as a model. 
Importantly, they also confirm that the proposed contributions allow PT-$\nu\mathcal{R}_{\varepsilon}$ to be a very competitive alternative to GPs in BO.

\begin{table}[ht!]
\caption{Average performance measured by \% improvement over RS, per model and by dimensions.}
\label{tab:average_results}
    \centering
    \begin{tabular}{c|cccc|c|c}
        \hline
        Dimension & PT & PT-$\mathcal{R}_{\varepsilon}$ & PT-$\nu$ & PT-$\nu\mathcal{R}_{\varepsilon}$ & GP & \# Tasks \\
        \hline
        1 & 20.5 & 88.1 & 79.2 & 73.6 & \textbf{93.6} & 6 \\
        % 1 & 17.1 & 72.4 & 66 & 60 & \textbf{93.6} & 6 \\
        2 & 103.3 & 112.4 & 111.8 & \textbf{114.3} & 111 & 16 \\
        5 & 73 & 77 & 74.7 & \textbf{78} &  65.8 & 8 \\
        10 & 36 & 71.1 & 71.8 & \textbf{72.1} & 63.8 & 8 \\
        \hline
    \end{tabular}
\end{table}

%\subsection{Bayesian Optimisation on Corrupted Benchmarks}\label{subsec:exp:bo}
% \subsection{Functions with Heteroscedastic Output Noise}\label{subsubsec:exp:bo:hsdc}
% \subsection{Functions with Input Noise}\label{subsubsec:exp:bo:inpn}
% \subsection{Functions with Discontinuities}\label{subsubsec:exp:bo:step}
% \subsection{Functions with Discontinuities with Heterosexuals and Non-stationarities}\label{subsubsec:exp:bo:hsdc_step}
% \subsection{Ablation Study}\label{subsec:exp:ablation}

\section{Conclusions, limitations, and future work}
\label{sec:conclusion}
We propose the use of a recently developed transformer architecture as a surrogate model for black-box optimisation.
Nevertheless, its direct application exhibits undesirable properties which we analyse.
To mitigate those shortcomings, we introduce two components: 1) a non-uniform prior and 2) a novel regularisation term. %promoting stationary points more sensitive to unexplored regions.
Given the same pre-trained model on data sampled from random hyper-GP priors and using only forward passes (no fine-tuning on the true black-boxes), we achieve an order of magnitude time reduction compared to GP-based BO while not increasing sample complexities. To our knowledge, this is among the first works demonstrating sample efficient Bayesian optimisation with transformer models.

%the  achieve a better sample complexity than GPs on a numerous number of tasks, but also one order of magnitude less in computation time.
\paragraph{Limitations.} Although successful in those instances, the limitations of our work mostly lie in the pre-training of the PT. Notably in higher dimensions, we require more samples for a better predictive posteriors. Unfortunately, the memory requirement of our model is quadratic in terms of the number of data points. This, of course, can prohibit scaling our results into large dimensional problems. 
Another limitation that is shared with most machine learning algorithms in that our model requires tuning hyperparameters common to BO, while introducing yet another hyperparameter, $\epsilon$, needed by our regulariser. Scaling our probabilistic transformer is an orthogonal but important direction for future work as noted below. 
Concerning the choice of $\varepsilon$, we presented an ablation study in Appendix \ref{app:hyperparam}.  

\paragraph{Future work.} There are many exciting avenues for future research. First, we plan to scale our model to higher-dimensional settings by proposing novel distributed computing schemes. Second, we wish to consider difficult BO problems involving heteroscedastic and non-smooth black-boxes.

% \begin{ack}
% Acknowledgements
% \end{ack}

% \section*{References}
% \newpage
\bibliographystyle{unsrtnat}
\bibliography{bib}
\newpage

\section{Appendix}
In this appendix we wish to provide insights that we did not have space to detail in the main paper. 
In particular we give a proof of our claim from Section \ref{subsec:pt4bo:reg} first in the Section \ref{app:proof_rasul}.
Moreover, we provide an ablation experiment on the regulariser hyperparameter $\varepsilon$ in Section \ref{app:ablation} and give an intuitive illustration of the introduced components in Section \ref{app:visualExps}.
Finally, we provide results for all experiments in Section \ref{app:add_exp}, hyperparameters in Section \ref{app:hyperparam} and report time and hardware details in Section \ref{app:time}.

\subsection{Derivation}
\label{app:proof_rasul}
While introducing our novel regulariser in the main paper, we noted that under most common stationary kernels, GP posterior distributions become closer as inputs get closer. In this section of the appendix, we provide a formal argument supporting our claim. 

We start with assumptions on the black-box objective function and the kernel of the associated surrogate GP model.

\begin{assumption}\label{A_1}
A black-box objective function $f(\cdot)$ is defined over a bounded domain $\mathcal{X} \subset \mathbb{R}^d$ of diameter $R$ and its noisy evaluations $y(\bm{x}) = f(\bm{x}) + \mathcal{N}(0,\sigma^2_{\text{noise}})$ are bounded, i.e.
\begin{align*}
    \sup_{(\bm{x},\tilde{\bm{x}})\in\mathcal{X}^2}||\bm{x} - \tilde{\bm{x}}||_2 \le R,\  \ \ |y(\bm{x})| \le C, \ \ \ \forall \bm{x}\in\mathcal{X} \ \ \ \  \text{ with probability }1.
\end{align*}
\end{assumption}

\begin{assumption}\label{A_2}
The Gaussian process surrogate associated with function $f(\cdot)$ has a covariance kernel $k_{\bm{\theta}}(\cdot, \cdot):\mathcal{X}\times\mathcal{X}\to \mathbb{R}^{+}$, such that for all $\bm{x},\bm{z}\in\mathcal{X}$ and all $\bm{\theta}$
\begin{align*}
    &k_{\bm{\theta}}(\bm{x},\bm{x}) = c,\\\nonumber
    &||\nabla_{\bm{x}}k_{\bm{\theta}}(\bm{x}, \bm{z})||_2 \le B,\\\nonumber
    &k_{\bm{\theta}}(\bm{x},\bm{z})\ge b.
\end{align*}
\end{assumption}

Please notice that the above assumptions are standard in the literature \citep{2006_Rasmussen} and hold for popular stationary kernels (e.g., squared exponential and Mat\'ern kernels) defined over the bounded search region $\mathcal{X}$.

Given these assumptions, we are ready to establish the proximity result between posterior distributions associated with two close input locations:

\begin{lemma}
Let $\bm{x},\tilde{\bm{x}}\in\mathcal{X}$ such that $||\bm{x} - \tilde{\bm{x}}||_2\le \eta$ for some small positive $\eta$, and let us consider two GP posterior distributions $\mathcal{N}(\mu(\bm{x}|D), \sigma^2(\bm{x}|D))$ and $\mathcal{N}( \mu(\tilde{\bm{x}}|D), \sigma^2(\tilde{\bm{x}}|D)$ modelling function $f(\cdot)$ at input locations $\bm{x}$ and $\tilde{\bm{x}}$ respectively after observing dataset $D = \{\bm{x}_i, y_i\}^N_{i=1}$. Then, with probability 1 we have
\begin{align*}
    \text{KL}\left(\mathcal{N}( \mu(\bm{x}|D), \sigma^2(\bm{x}|D)) || \mathcal{N}( \mu(\tilde{\bm{x}}|D), \sigma^2(\tilde{\bm{x}}|D)\right) \le \tilde{C}\eta.
\end{align*}
for some positive constant $\tilde{C}$. 
\end{lemma}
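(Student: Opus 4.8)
The plan is to bound the KL divergence between two univariate Gaussians directly via its closed form, and then show each term in that formula is $O(\eta)$ using Lipschitz-type control on the GP posterior mean and variance as functions of the query point. Recall that for two Gaussians
\begin{equation*}
\text{KL}\bigl(\mathcal{N}(\mu_1,\sigma_1^2)\,\|\,\mathcal{N}(\mu_2,\sigma_2^2)\bigr) = \log\frac{\sigma_2}{\sigma_1} + \frac{\sigma_1^2 + (\mu_1-\mu_2)^2}{2\sigma_2^2} - \frac12,
\end{equation*}
so it suffices to prove (i) $|\mu(\bm{x}|D) - \mu(\tilde{\bm{x}}|D)| \le L_\mu \eta$, (ii) $|\sigma^2(\bm{x}|D) - \sigma^2(\tilde{\bm{x}}|D)| \le L_{\sigma^2}\eta$, and (iii) a uniform lower bound $\sigma^2(\bm{x}|D) \ge \sigma^2_{\min} > 0$ over $\mathcal{X}$. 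Given (i)--(iii), a first-order Taylor/Lipschitz argument on the map $(\mu,\sigma^2)\mapsto \text{KL}$ around the point where the two arguments coincide yields the claimed $\tilde{C}\eta$ bound, with $\tilde{C}$ depending only on $L_\mu, L_{\sigma^2}, \sigma^2_{\min}$, $C$, and the problem constants.

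To get (i) and (ii), I would write the standard GP posterior formulas $\mu(\bm{x}|D) = \bm{k}(\bm{x})^\top (\bm{K} + \sigma^2_{\text{noise}}\bm{I})^{-1}\bm{y}$ and $\sigma^2(\bm{x}|D) = k_{\bm\theta}(\bm{x},\bm{x}) - \bm{k}(\bm{x})^\top (\bm{K}+\sigma^2_{\text{noise}}\bm{I})^{-1}\bm{k}(\bm{x})$, where $\bm{k}(\bm{x}) = (k_{\bm\theta}(\bm{x},\bm{x}_i))_{i=1}^N$. The only $\bm{x}$-dependence is through $\bm{k}(\bm{x})$ (since $k_{\bm\theta}(\bm{x},\bm{x}) = c$ is constant by Assumption \ref{A_2}). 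By the gradient bound $\|\nabla_{\bm{x}} k_{\bm\theta}(\bm{x},\bm{z})\|_2 \le B$, the map $\bm{x}\mapsto \bm{k}(\bm{x})$ is $B\sqrt{N}$-Lipschitz in $\ell_2$, hence $\|\bm{k}(\bm{x}) - \bm{k}(\tilde{\bm{x}})\|_2 \le B\sqrt{N}\,\eta$. The matrix $(\bm{K}+\sigma^2_{\text{noise}}\bm{I})^{-1}$ has operator norm at most $1/\sigma^2_{\text{noise}}$ (or $1/(b + \sigma^2_{\text{noise}})$ using $k_{\bm\theta}\ge b$), and $\|\bm{y}\|_2 \le C\sqrt{N}$ by Assumption \ref{A_1}; combining these via Cauchy--Schwarz and the triangle inequality gives (i), and an analogous computation — writing the difference of the quadratic forms as $\bm{k}(\bm{x})^\top M \bm{k}(\bm{x}) - \bm{k}(\tilde{\bm{x}})^\top M \bm{k}(\tilde{\bm{x}}) = (\bm{k}(\bm{x})-\bm{k}(\tilde{\bm{x}}))^\top M (\bm{k}(\bm{x})+\bm{k}(\tilde{\bm{x}}))$ and bounding $\|\bm{k}(\cdot)\|_2 \le \sqrt{N}\max_{\bm{x},\bm{z}} k_{\bm\theta}(\bm{x},\bm{z}) \le c\sqrt{N}$ (using $k_{\bm\theta}(\bm{x},\bm{x})=c$ and positive-definiteness to bound off-diagonals, or just a uniform kernel bound) — gives (ii). For (iii), note $\sigma^2(\bm{x}|D) \ge \sigma^2_{\text{noise}}\,\|(\bm{K}+\sigma^2_{\text{noise}}\bm{I})^{-1}\|^{-1}\cdot(\text{something})$; cleaner is the classical fact that the posterior variance with observation noise satisfies $\sigma^2(\bm{x}|D) \ge c - \bm{k}(\bm{x})^\top(\bm{K}+\sigma^2_{\text{noise}}\bm{I})^{-1}\bm{k}(\bm{x}) \ge c \cdot \frac{\sigma^2_{\text{noise}}}{\lambda_{\max}(\bm{K}) + \sigma^2_{\text{noise}}} > 0$, which is bounded below uniformly in $\bm{x}$ because $\lambda_{\max}(\bm{K}) \le \Tr(\bm{K}) = Nc$.

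The main obstacle I anticipate is making the lower bound (iii) genuinely uniform and clean: the naive bound $\sigma^2 \ge 0$ is useless since it appears in the denominator $1/(2\sigma_2^2)$ and inside $\log(\sigma_2/\sigma_1)$, so one needs the observation-noise floor to be exploited carefully, and one must confirm the resulting constant does not blow up. A secondary subtlety is the dependence of $\tilde C$ on $N$ (the number of observations): the Lipschitz constants above carry factors of $N$, so the statement is really "for fixed $D$" — which matches the lemma as stated, since $D$ is fixed. I would flag that the constant $\tilde C$ depends on $N$, $B$, $c$, $b$, $C$, $\sigma^2_{\text{noise}}$ but crucially not on $\eta$ or on $\bm{\theta}$ (the latter because all kernel bounds in Assumption \ref{A_2} are uniform in $\bm{\theta}$), which is exactly what is needed to justify the regulariser design in Section \ref{subsec:pt4bo:reg}. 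The rest is routine algebra: assemble (i)--(iii), Taylor-expand the KL formula, and collect constants.
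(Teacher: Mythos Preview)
Your proposal is correct and follows the same overall route as the paper: write the closed-form KL between univariate Gaussians, control $|\mu(\bm{x}|D)-\mu(\tilde{\bm{x}}|D)|$ and $|\sigma^2(\bm{x}|D)-\sigma^2(\tilde{\bm{x}}|D)|$ via the Lipschitz behaviour of $\bm{x}\mapsto\bm{k}(\bm{x})$ combined with $\|(\bm{K}+\sigma^2_{\text{noise}}\bm{I})^{-1}\|_2\le 1/\sigma^2_{\text{noise}}$, and assemble.

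Two execution details differ and are worth flagging. First, you bound $\|\bm{k}(\bm{x})-\bm{k}(\tilde{\bm{x}})\|_2\le B\sqrt{N}\,\eta$ directly from the gradient assumption via the mean-value inequality; the paper instead Taylor-expands each entry, forms the ratio $k_{\bm\theta}(\tilde{\bm{x}},\bm{x}_j)/k_{\bm\theta}(\bm{x},\bm{x}_j)$, and invokes the kernel lower bound $k_{\bm\theta}\ge b$ from Assumption~\ref{A_2} to control the denominator, arriving at the looser $\tfrac{cNB}{b}\eta+o(\eta)$. Your route is cleaner, does not need $b$, and gives a non-asymptotic constant with $\sqrt{N}$ rather than $N$ dependence. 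Second, the paper simply \emph{assumes} a floor $\min\{\sigma^2(\bm{x}|D),\sigma^2(\tilde{\bm{x}}|D)\}\ge\varrho>0$ when plugging into the KL formula, whereas you derive one; your bound $\sigma^2(\bm{x}|D)\ge c\,\sigma^2_{\text{noise}}/(\lambda_{\max}(\bm{K})+\sigma^2_{\text{noise}})$ is indeed valid (from positive semidefiniteness of the augmented kernel matrix one has $\bm{k}^\top\bm{K}^{-1}\bm{k}\le c$, and then $\bm{k}^\top(\bm{K}+\sigma^2_{\text{noise}}\bm{I})^{-1}\bm{k}\le \tfrac{\lambda_{\max}}{\lambda_{\max}+\sigma^2_{\text{noise}}}\,\bm{k}^\top\bm{K}^{-1}\bm{k}$ since $t\mapsto t/(t+\sigma^2_{\text{noise}})$ is increasing, argued in the eigenbasis of $\bm{K}$), so your step (iii) closes a gap the paper leaves open.
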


\begin{proof} Consider two close points $\bm{x},\tilde{\bm{x}}\in\mathcal{X}$ such that $||\bm{x} - \tilde{\bm{x}}||_2\le \eta$ for some small value $\eta > 0$. Then, for the posterior means we have
\begin{align*}
    &\mu(\bm{x}|D) = k^{\mathsf{T}}_{\bm{\theta}}(\bm{x}, D)\left[\text{K}_{\bm{\theta}}(D) + \sigma^2_{\text{noise}}\text{I}\right]^{-1}\bm{y}, \\\nonumber
    &\mu(\tilde{\bm{x}}|D) = k^{\mathsf{T}}_{\bm{\theta}}(\tilde{\bm{x}},D)\left[\text{K}_{\bm{\theta}}(D) + \sigma^2_{\text{noise}}\text{I}\right]^{-1}\bm{y}.
\end{align*}
where $\bm{y} = [y_i]^N_{i=1}$ is a vector of function observations, and $\text{K}_{\bm{\theta}}(D) = [k_{\bm{\theta}}(\bm{x}_i, \bm{x}_j)]_{\bm{x}_i, \bm{x}_j \in D}$ is a covariance matrix associated with observations in $D$. For simplicity, let us denote $\text{A} = \left[\text{K}_{\bm{\theta}}(D) + \sigma^2_{\text{noise}}\text{I}\right]^{-1}$. Then, using Assumption \ref{A_1}, we have
\begin{align*}
    &|\mu(\bm{x}|D) - \mu(\tilde{\bm{x}}|D)| = |k^{\mathsf{T}}_{\bm{\theta}}(\bm{x}, D)\text{A}\bm{y} - k^{\mathsf{T}}_{\bm{\theta}}(\tilde{\bm{x}}, D)\text{A}\bm{y}| \le \\\nonumber 
    &||k_{\bm{\theta}}(\bm{x}, D) - k_{\bm{\theta}}(\tilde{\bm{x}}, D)||_2||\text{A}||_2||\bm{y}||_2 \le C||k_{\bm{\theta}}(\bm{x}, D) - k_{\bm{\theta}}(\tilde{\bm{x}}, D)||_2||\text{A}||_2, \ \ \ \ \text{w.p. 1}.
\end{align*}
Next, notice that $\text{A} = \text{K}_{\bm{\theta}}(D) + \sigma^2_{\text{noise}}\text{I} \succeq \sigma^2_{\text{noise}}\text{I}$, hence $\text{A} \preceq \frac{1}{\sigma^2_{\text{noise}}}\text{I}$ and 
\begin{align}\label{result_poster}
    |\mu(\bm{x}|D) - \mu(\tilde{\bm{x}}|D)| \le  \frac{C}{\sigma^2_{\text{noise}}}||k_{\bm{\theta}}(\bm{x}, D) - k_{\bm{\theta}}(\tilde{\bm{x}}, D)||_2, \ \ \ \ \text{w.p. 1.}
\end{align}
Let us bound the term $||k_{\bm{\theta}}(\bm{x}, D) - k_{\bm{\theta}}(\tilde{\bm{x}}, D)||_2$. By definition, we have
\begin{align}\label{interm_result_one}
    &||k_{\bm{\theta}}(\bm{x}, D) - k_{\bm{\theta}}(\tilde{\bm{x}}, D)||_2 = \sqrt{\sum_{j=1}^N\left(k_{\bm{\theta}}(\bm{x}, \bm{x}_j) - k_{\bm{\theta}}(\tilde{\bm{x}}, \bm{x}_j)\right)^2} \le \\\nonumber
    &\sum_{j=1}^N|k_{\bm{\theta}}(\bm{x}, \bm{x}_j) - k_{\bm{\theta}}(\tilde{\bm{x}}, \bm{x}_j)| = \sum_{j=1}^Nk_{\bm{\theta}}(\bm{x}, \bm{x}_j)\left|1 - \frac{k_{\bm{\theta}}(\tilde{\bm{x}}, \bm{x}_j)}{k_{\bm{\theta}}(\bm{x}, \bm{x}_j)}\right|\le c\sum_{j=1}^N\left|1 - \frac{k_{\bm{\theta}}(\tilde{\bm{x}}, \bm{x}_j)}{k_{\bm{\theta}}(\bm{x}, \bm{x}_j)}\right|.
\end{align} 
where in the last step we use Assumption \ref{A_2} (specifically, for any $\bm{x},\bm{z}\in\mathcal{X}$ we have $k_{\bm{\theta}}(\bm{x},\bm{z}) \le k_{\bm{\theta}}(\bm{x}, \bm{x}) = c$). Next, consider the Taylor expansion of the kernel
\begin{align*}
    k_{\bm{\theta}}(\tilde{\bm{x}}, \bm{x}_j) = k_{\bm{\theta}}(\bm{x}, \bm{x}_j) + \nabla_{\bm{x}}k^{\mathsf{T}}_{\bm{\theta}}(\bm{x}, \bm{x}_j)\left.\right|_{\bm{x} = \tilde{\bm{x}}} [\tilde{\bm{x}} - \bm{x}] + o(||\tilde{\bm{x}} - \bm{x}||_2).
\end{align*}
Therefore, using Assumption \ref{A_2} we have
\begin{align*}
    &\frac{k_{\bm{\theta}}(\tilde{\bm{x}}, \bm{x}_j)}{k_{\bm{\theta}}(\bm{x}, \bm{x}_j)} = 1 + \frac{\nabla_{\bm{x}}k^{\mathsf{T}}_{\bm{\theta}}(\bm{x}, \bm{x}_j)\left.\right|_{\bm{x} = \tilde{\bm{x}}}[\tilde{\bm{x}} - \bm{x}]}{k_{\bm{\theta}}(\bm{x}, \bm{x}_j)} + \frac{o(\eta)}{k_{\bm{\theta}}(\bm{x}, \bm{x}_j)} = \\\nonumber
    &1 + \frac{\nabla_{\bm{x}}k^{\mathsf{T}}_{\bm{\theta}}(\bm{x}, \bm{x}_j)\left.\right|_{\bm{x} = \tilde{\bm{x}}}[\tilde{\bm{x}} - \bm{x}]}{k_{\bm{\theta}}(\bm{x}, \bm{x}_j)} + o(\eta).
\end{align*}
Hence, using Assumption \ref{A_2} and the Cauchy–Schwarz inequality we obtain
\begin{align*}
    &\left|1 - \frac{k_{\bm{\theta}}(\tilde{\bm{x}}, \bm{x}_j)}{k_{\bm{\theta}}(\bm{x}, \bm{x}_j)} \right| \le \left|\frac{\nabla_{\bm{x}}k^{\mathsf{T}}_{\bm{\theta}}(\bm{x}, \bm{x}_j)\left.\right|_{\bm{x} = \tilde{\bm{x}}}[\tilde{\bm{x}} - \bm{x}]}{k_{\bm{\theta}}(\bm{x}, \bm{x}_j)} + o(\eta)\right| \le \\\nonumber
    &\frac{||\nabla_{\bm{x}}k^{\mathsf{T}}_{\bm{\theta}}(\bm{x}, \bm{x}_j)\left.\right|_{\bm{x} = \tilde{\bm{x}}}||_2||\tilde{\bm{x}} - \bm{x}||_2}{b} + o(\eta) \le \frac{B\eta}{b} + o(\eta).
\end{align*}
Applying this result in (\ref{interm_result_one}) gives
\begin{align}\label{interm_result_two}
    &||k_{\bm{\theta}}(\bm{x}, D) - k_{\bm{\theta}}(\tilde{\bm{x}}, D)||_2 \le c\sum_{j=1}^N\left[\frac{B\eta}{b}+ o(\eta)\right] = \frac{cNB}{b}\eta + o(\eta).
\end{align}
Hence, for the posterior mean difference in (\ref{result_poster}) we have
\begin{align}\label{posterior_final_bound}
    &|\mu(\bm{x}|D) - \mu(\tilde{\bm{x}}|D)| \le \frac{C}{\sigma^2_{\text{noise}}}\left[\frac{cNB}{b}\eta + o(\eta)\right]\ \ \ \ \text{w.p. 1.}
\end{align}
Next, let us bound the difference between the posterior variances. Following closed form expressions for posterior variances and Assumption \ref{A_2} we have that
\begin{align*}
    &\sigma^2(\bm{x}|D) = c - k^{\mathsf{T}}_{\bm{\theta}}(\bm{x}, D)\text{A}k_{\bm{\theta}}(\bm{x}, D),\\\nonumber
    &\sigma^2(\tilde{\bm{x}}|D) = c - k^{\mathsf{T}}_{\bm{\theta}}(\tilde{\bm{x}}, D)\text{A}k_{\bm{\theta}}(\tilde{\bm{x}}, D).
\end{align*}
Therefore, using Assumption \ref{A_2} we can write
\begin{align*}
    &|\sigma^2(\bm{x}|D) - \sigma^2(\tilde{\bm{x}}|D)| = |k^{\mathsf{T}}_{\bm{\theta}}(\bm{x}, D)\text{A}k_{\bm{\theta}}(\bm{x}, D) - k^{\mathsf{T}}_{\bm{\theta}}(\tilde{\bm{x}}, D)\text{A}k_{\bm{\theta}}(\tilde{\bm{x}}, D)| = \\\nonumber
    &|k^{\mathsf{T}}_{\bm{\theta}}(\bm{x}, D)\text{A}k_{\bm{\theta}}(\bm{x}, D) - k^{\mathsf{T}}_{\bm{\theta}}(\bm{x}, D)\text{A}k_{\bm{\theta}}(\tilde{\bm{x}}, D) + k^{\mathsf{T}}_{\bm{\theta}}(\bm{x}, D)\text{A}k_{\bm{\theta}}(\tilde{\bm{x}}, D) - k^{\mathsf{T}}_{\bm{\theta}}(\tilde{\bm{x}}, D)\text{A}k_{\bm{\theta}}(\tilde{\bm{x}}, D)| \le \\\nonumber
    &||k_{\bm{\theta}}(\bm{x}, D) - k_{\bm{\theta}}(\tilde{\bm{x}}, D)||_2||\text{A}||_2||k_{\bm{\theta}}(\bm{x},D)||_2 + ||k_{\bm{\theta}}(\bm{x}, D) - k_{\bm{\theta}}(\tilde{\bm{x}}, D)||_2||\text{A}||_2||k_{\bm{\theta}}(\tilde{\bm{x}},D)||_2 \le \\\nonumber
    &\frac{2c\sqrt{N}}{\sigma^2_{\text{noise}}}||k_{\bm{\theta}}(\bm{x}, D) - k_{\bm{\theta}}(\tilde{\bm{x}}, D)||_2.
\end{align*}
Applying bound (\ref{interm_result_two}) in the above result gives
\begin{align}\label{posterior_variance_result}
    |\sigma^2(\bm{x}|D) - \sigma^2(\tilde{\bm{x}}|D)| \le \frac{2c\sqrt{N}}{\sigma^2_{\text{noise}}}\left[\frac{cNB}{b}\eta + o(\eta)\right].
\end{align}
Now, for the KL divergence between two univariate Gaussian distributions $\mathcal{N}(\mu(\bm{x}|D), \sigma^2(\bm{x}|D))$ and $\mathcal{N}( \mu(\tilde{\bm{x}}|D), \sigma^2(\tilde{\bm{x}}|D)$ with non-trivial variances $\min\{\sigma^2(\tilde{\bm{x}}|D), \sigma^2(\bm{x}|D)\} \ge \varrho > 0$ we have
\begin{align*}
    &\text{KL}(\mathcal{N}( \mu(\bm{x}|D), \sigma^2(\bm{x}|D)) || \mathcal{N}( \mu(\tilde{\bm{x}}|D), \sigma^2(\tilde{\bm{x}}|D)) = \\\nonumber
    &\log\frac{\sigma(\tilde{\bm{x}}|D)}{\sigma(\bm{x}|D)} + \frac{\sigma^2(\bm{x}|D) + (\mu(\bm{x}|D) - \mu(\tilde{\bm{x}}|D))^2}{2\sigma^2(\tilde{\bm{x}}|D)} - \frac{1}{2}.
\end{align*}
Using properties of log function, the above expression can be written as:
\begin{align*}
    &\text{KL}(\mathcal{N}( \mu(\bm{x}|D), \sigma^2(\bm{x}|D)) || \mathcal{N}( \mu(\tilde{\bm{x}}|D), \sigma^2(\tilde{\bm{x}}|D)) = \\\nonumber &\frac{1}{2}\log\frac{\sigma^2(\tilde{\bm{x}}|D)}{\sigma^2(\bm{x}|D)} + \frac{\sigma^2(\bm{x}|D) + (\mu(\bm{x}|D) - \mu(\tilde{\bm{x}}|D))^2}{2\sigma^2(\tilde{\bm{x}}|D)} - \frac{1}{2}.
    %&\frac{1}{2}\log\left[1 + \frac{\sigma^2(\tilde{\bm{x}}|D) -\sigma^2(\bm{x}|D) }{\sigma^2(\bm{x}|D)}\right] + \frac{\sigma^2(\bm{x}|D) - \sigma^2(\tilde{\bm{x}}|D) + (\mu(\bm{x}|D) - \mu(\tilde{\bm{x}}|D))^2}{2\sigma^2(\tilde{\bm{x}}|D)}.
\end{align*}
and, after simplification:
\begin{align}\label{KL_result_one}
    &\text{KL}(\mathcal{N}( \mu(\bm{x}|D), \sigma^2(\bm{x}|D)) || \mathcal{N}( \mu(\tilde{\bm{x}}|D), \sigma^2(\tilde{\bm{x}}|D)) = \\\nonumber &\frac{1}{2}\log\frac{\sigma^2(\tilde{\bm{x}}|D)}{\sigma^2(\bm{x}|D)} + \frac{\sigma^2(\bm{x}|D) + (\mu(\bm{x}|D) - \mu(\tilde{\bm{x}}|D))^2}{2\sigma^2(\tilde{\bm{x}}|D)} - \frac{1}{2} = \\\nonumber
    &\frac{1}{2}\log\left[1 + \frac{\sigma^2(\tilde{\bm{x}}|D) -\sigma^2(\bm{x}|D) }{\sigma^2(\bm{x}|D)}\right] + \frac{\sigma^2(\bm{x}|D) - \sigma^2(\tilde{\bm{x}}|D) + (\mu(\bm{x}|D) - \mu(\tilde{\bm{x}}|D))^2}{2\sigma^2(\tilde{\bm{x}}|D)}.
\end{align}
Using result (\ref{posterior_variance_result}), can have
\begin{align*}
    &\log\left[1 + \frac{\sigma^2(\tilde{\bm{x}}|D) -\sigma^2(\bm{x}|D) }{\sigma^2(\bm{x}|D)}\right] \le \frac{|\sigma^2(\tilde{\bm{x}}|D) -\sigma^2(\bm{x}|D)| }{\sigma^2(\bm{x}|D)} + o\left(\frac{|\sigma^2(\tilde{\bm{x}}|D) -\sigma^2(\bm{x}|D)| }{\sigma^2(\bm{x}|D)}\right) \le \\\nonumber
    &\frac{2c\sqrt{N}}{\sigma^2_{\text{noise}}\varrho}\left[\frac{cNB}{b}\eta + o(\eta)\right] + o\left(\frac{2c\sqrt{N}}{\sigma^2_{\text{noise}}\varrho}\left[\frac{cNB}{b}\eta + o(\eta)\right]\right) = \frac{2c^2N^{\frac{3}{2}}B}{\sigma^2_{\text{noise}}b\varrho}\eta + o(\eta).
\end{align*}
and, using results (\ref{posterior_final_bound}) and (\ref{posterior_variance_result}) we can further write
\begin{align*}
    &\frac{\sigma^2(\bm{x}|D) - \sigma^2(\tilde{\bm{x}}|D) + (\mu(\bm{x}|D) - \mu(\tilde{\bm{x}}|D))^2}{2\sigma^2(\tilde{\bm{x}}|D)} \le \frac{|\sigma^2(\bm{x}|D) - \sigma^2(\tilde{\bm{x}}|D)|}{2\sigma^2(\tilde{\bm{x}}|D)} + \frac{(\mu(\bm{x}|D) - \mu(\tilde{\bm{x}}|D))^2}{2\sigma^2(\tilde{\bm{x}}|D)} \le \\\nonumber
    &\frac{c\sqrt{N}}{\sigma^2_{\text{noise}}\varrho}\left[\frac{cNB}{b}\eta + o(\eta)\right] + \frac{C^2}{2\sigma^2_{\text{noise}}\varrho}\left[\frac{cNB}{b}\eta + o(\eta)\right]^2 = \frac{c^2N^{\frac{3}{2}}B}{\sigma^2_{\text{noise}}b\varrho}\eta + o(\eta) \ \ \ \ \text{w.p. 1.}
\end{align*}
Therefore, applying these results in (\ref{KL_result_one}) we get
\begin{align*}
    &\text{KL}(\mathcal{N}( \mu(\bm{x}|D), \sigma^2(\bm{x}|D)) || \mathcal{N}( \mu(\tilde{\bm{x}}|D), \sigma^2(\tilde{\bm{x}}|D)) = \\\nonumber
    &\frac{1}{2}\log\left[1 + \frac{\sigma^2(\tilde{\bm{x}}|D) -\sigma^2(\bm{x}|D) }{\sigma^2(\bm{x}|D)}\right] + \frac{\sigma^2(\bm{x}|D) - \sigma^2(\tilde{\bm{x}}|D) + (\mu(\bm{x}|D) - \mu(\tilde{\bm{x}}|D))^2}{2\sigma^2(\tilde{\bm{x}}|D)} \le \\\nonumber
    &\frac{c^2N^{\frac{3}{2}}B}{\sigma^2_{\text{noise}}b\varrho}\eta + o(\eta) + \frac{c^2N^{\frac{3}{2}}B}{\sigma^2_{\text{noise}}b\varrho}\eta + o(\eta) = \frac{2c^2N^{\frac{3}{2}}B}{\sigma^2_{\text{noise}}b\varrho}\eta + o(\eta) = \tilde{C}\eta.
\end{align*}
with probability 1. This finishes the proof of the lemma.
\end{proof}

\subsection{Additional experiments}\label{app:add_exp}
As mentioned in the main paper, we present in Figures \ref{fig:bo_all_runs_1d}, \ref{fig:bo_all_runs_2d}, \ref{fig:bo_all_runs_5d} and \ref{fig:bo_all_runs_10d}, the results of all the experiments we ran. 
We display the simple regret results over 10 seeds for each method (see Table \ref{tab:notation}) against the wall-clock time. 
As the average results in Table \ref{tab:average_results} show, it is very difficult to outperform the GP in 1 dimension.
In higher dimensions, however, the PT trained with our components is very competitive.
For completeness, we also show, for the same experiments, the evolution of simple regrets against the number of samples in Figures \ref{fig:bo_all_runs_1d_sample}, \ref{fig:bo_all_runs_2d_sample}, \ref{fig:bo_all_runs_5d_sample} and \ref{fig:bo_all_runs_10d_sample}.
\begin{figure}[H]
    \centering
    \includegraphics[width=0.9\textwidth]{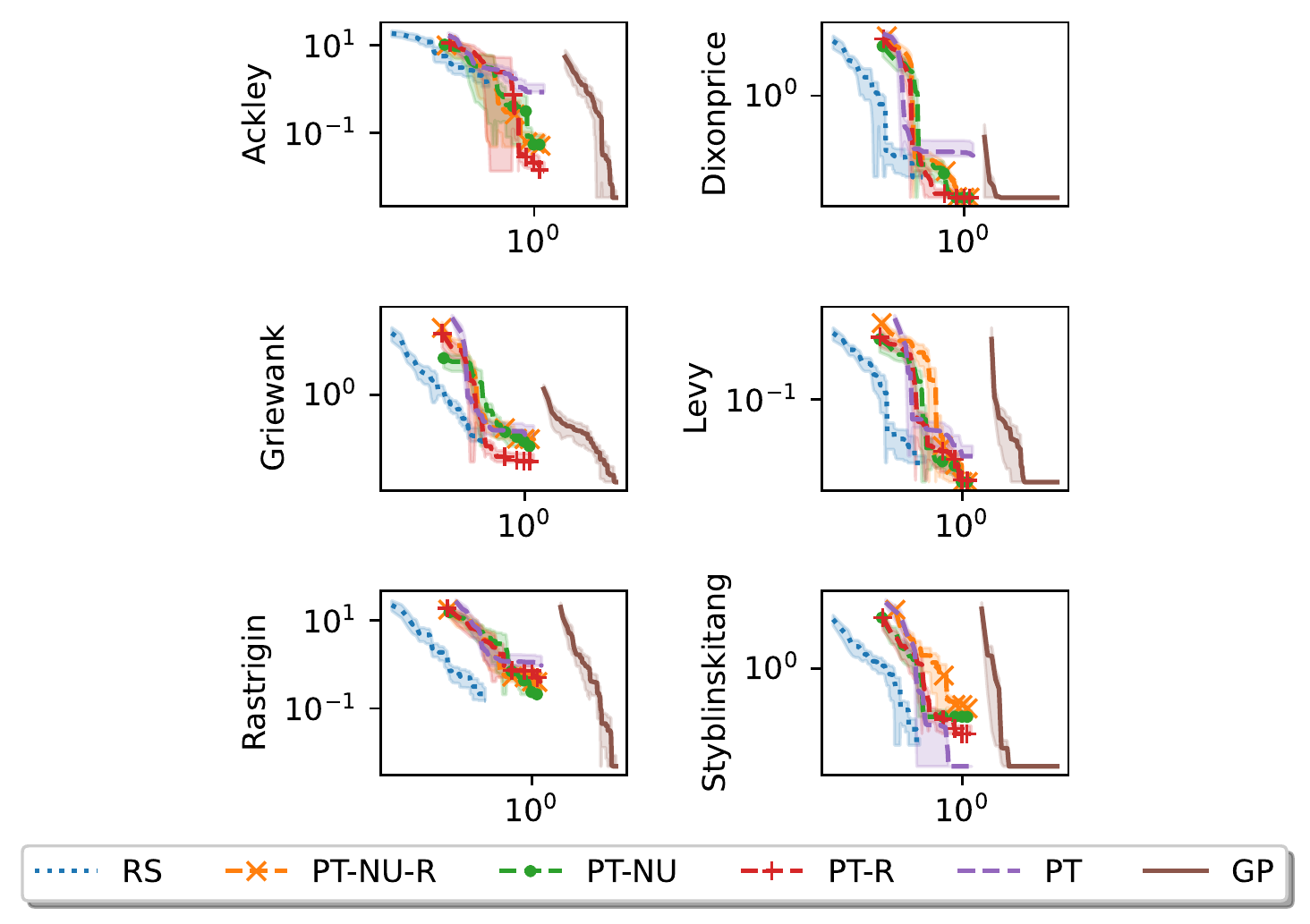}
    \caption{Simple regret results versus wall-clock time in 1 dimension. The GP is extremely sample-efficient in 1 dimension hence being hard to outperform. However, we note that our components still improve the original PT.}
    \label{fig:bo_all_runs_1d}
\end{figure}

\begin{figure}[H]
    \centering
    \includegraphics[width=0.9\textwidth]{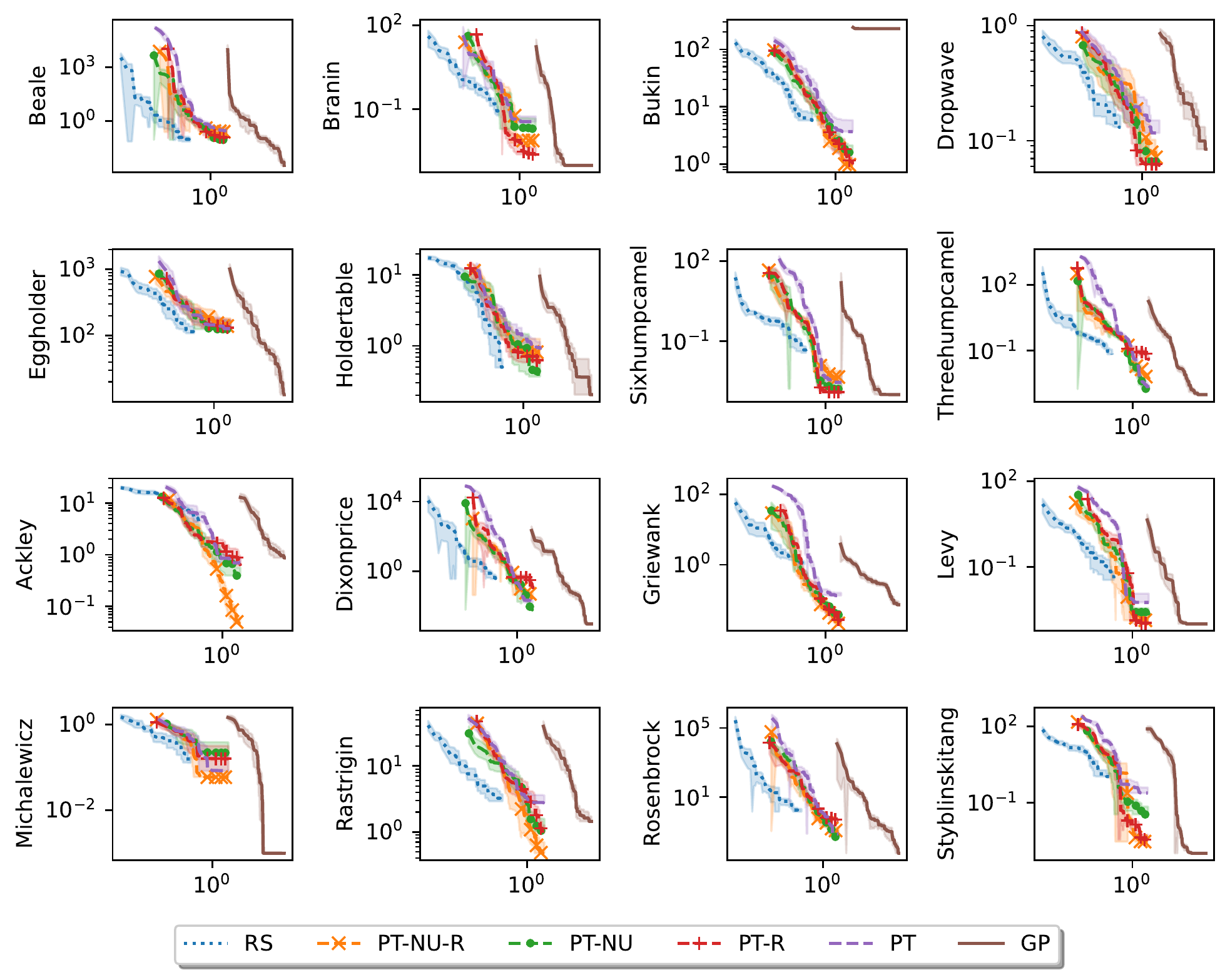}
    \caption{Simple regret results versus wall-clock time in 2 dimensions. Excluding Bukin function, the GP is still an extremely efficient surrogate but PT-$\nu\mathcal{R}_{\varepsilon}$ is able to outperform it in some cases, on top of improving over PT.}
    \label{fig:bo_all_runs_2d}
\end{figure}

\begin{figure}[H]
    \centering
    \includegraphics[width=0.8\textwidth]{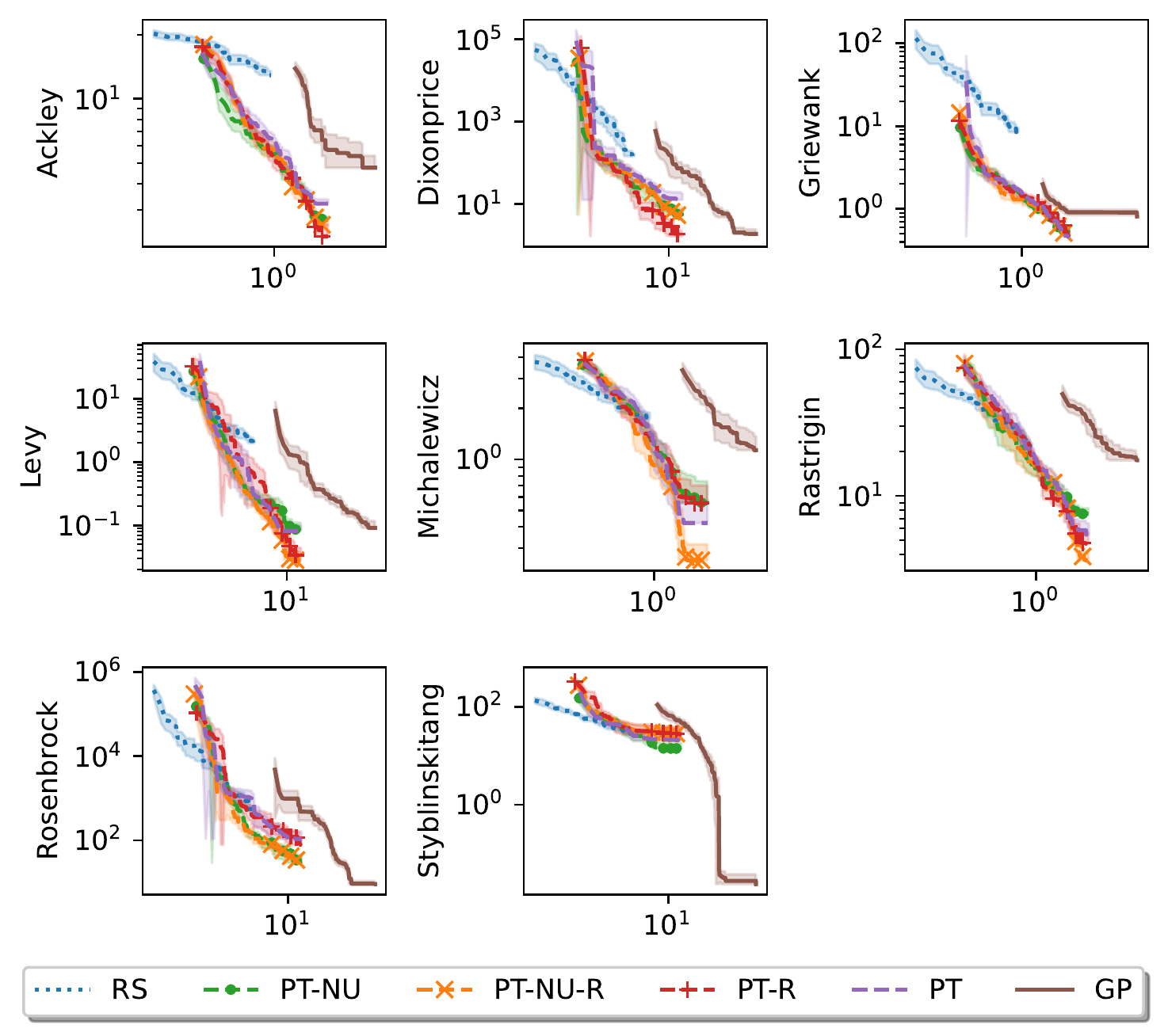}
    \caption{Simple regret results versus wall-clock time in 5 dimensions. Here, we see the advantage of our components as the PT trained with our components starts to drastically improve over the original PT and outperforms GP-based BO runs on even more tasks.}
    \label{fig:bo_all_runs_5d}
\end{figure}

\begin{figure}[H]
    \centering
    \includegraphics[width=0.8\textwidth]{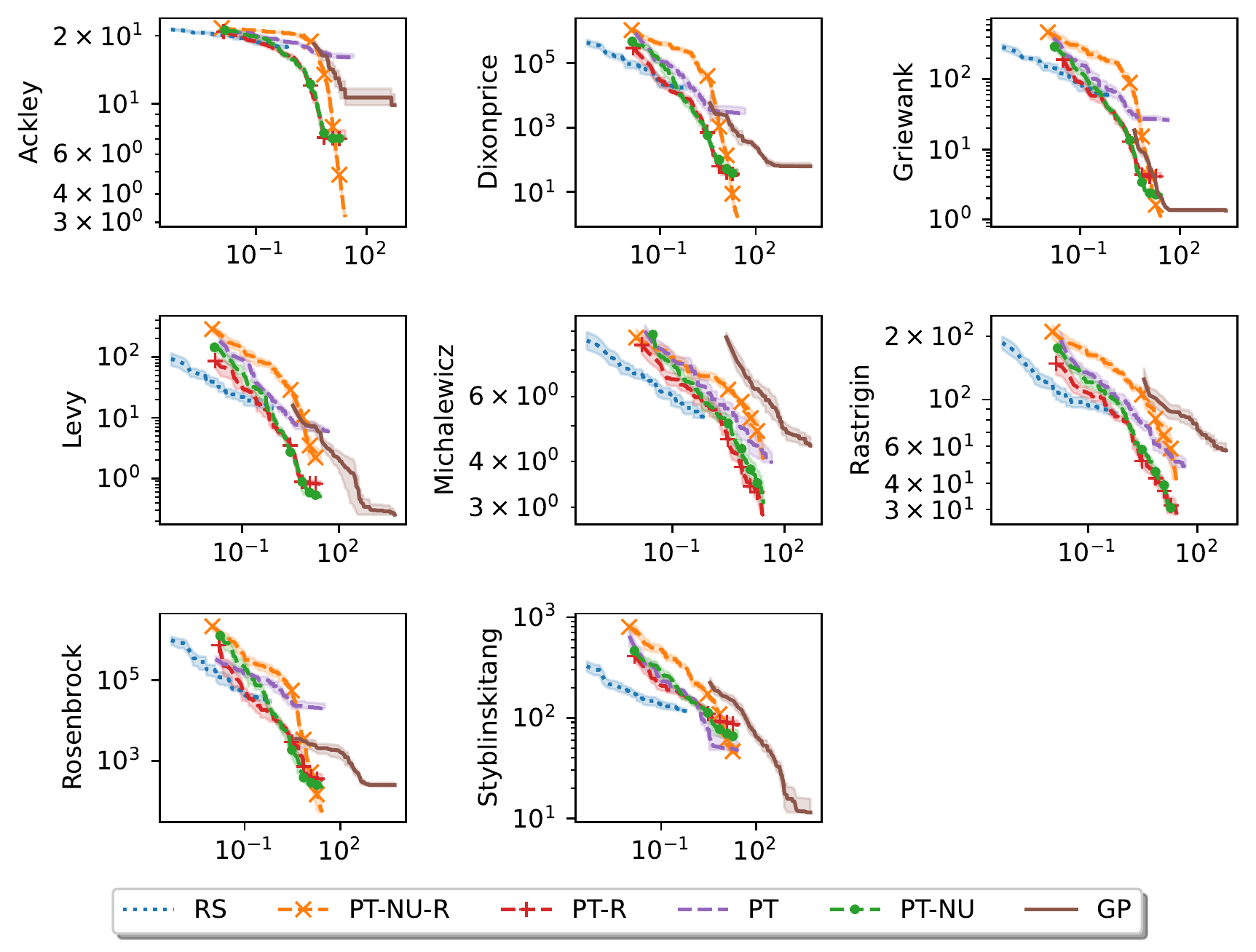}
    \caption{Simple regret results versus wall-clock time in 10 dimensions. We observe that our components systematically improve the regret as the original PT struggles even to outperform Random Search on several tasks.}
    \label{fig:bo_all_runs_10d}
\end{figure}

\begin{figure}[H]
    \centering
    \includegraphics[width=0.9\textwidth]{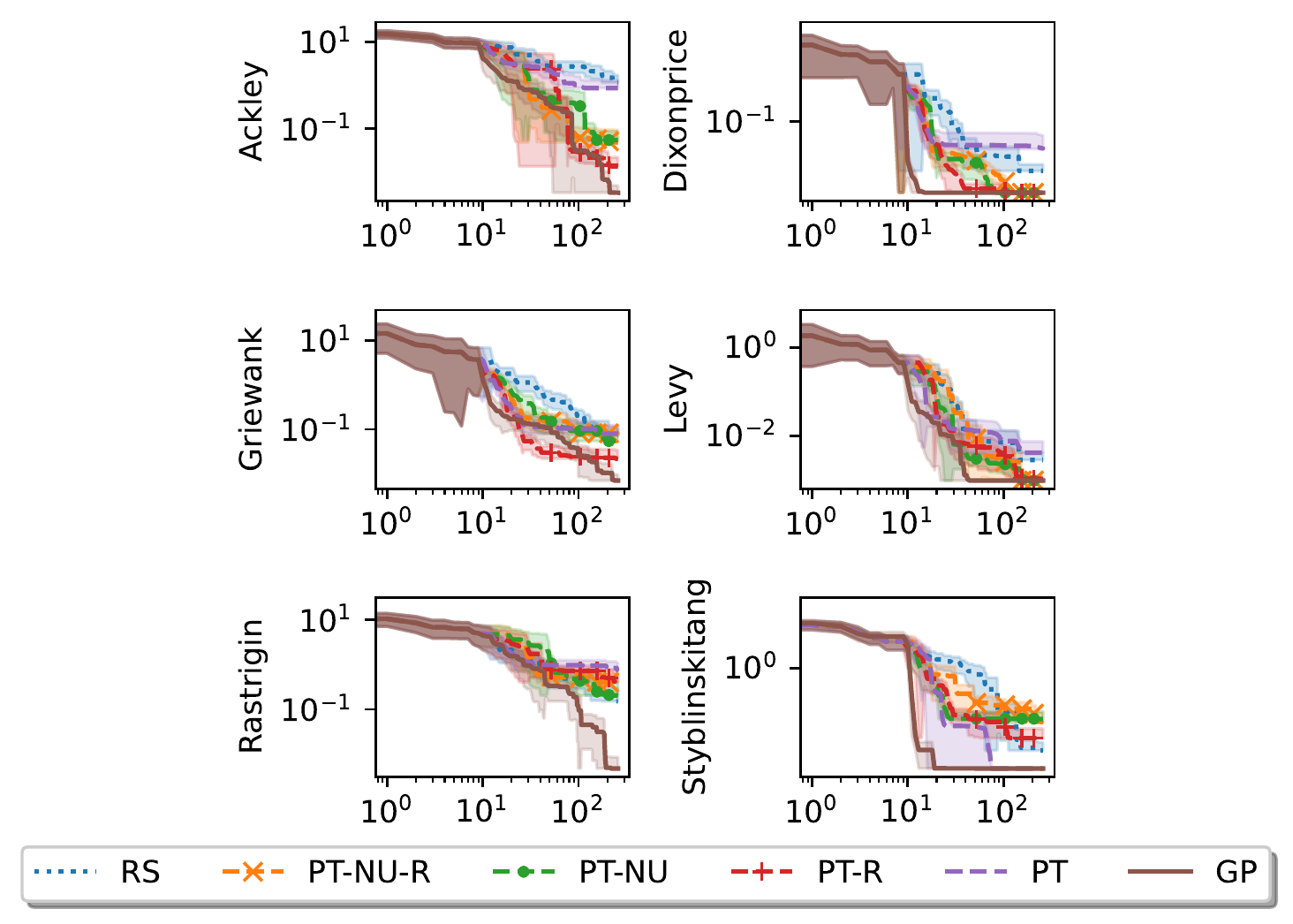}
    \caption{Simple regret results versus the number of samples time in 1 dimension.}
    \label{fig:bo_all_runs_1d_sample}
\end{figure}

\begin{figure}[H]
    \centering
    \includegraphics[width=0.99\textwidth]{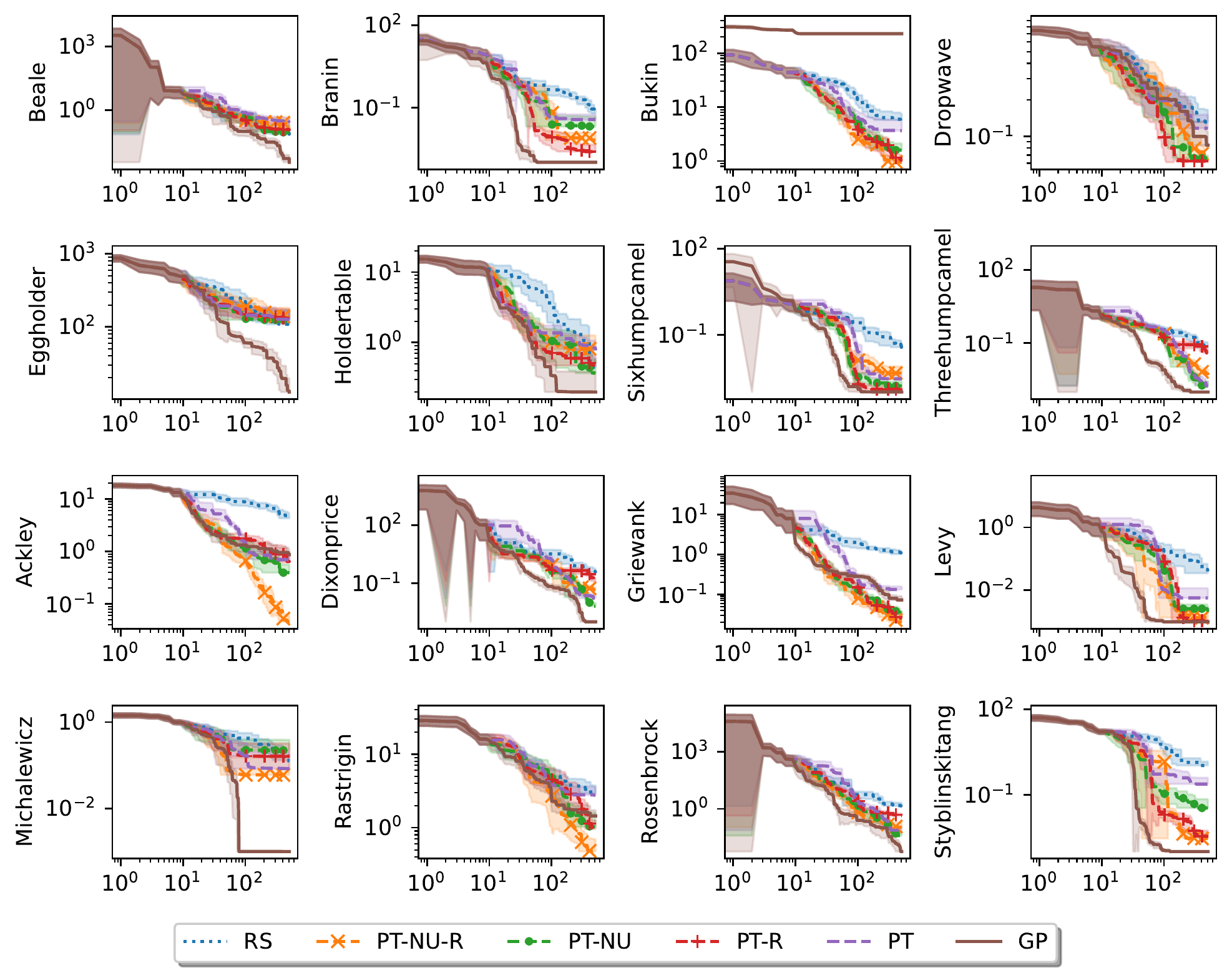}
    \caption{Simple regret results versus the number of samples time in 2 dimensions.}
    \label{fig:bo_all_runs_2d_sample}
\end{figure}

\begin{figure}[H]
    \centering
    \includegraphics[width=0.9\textwidth]{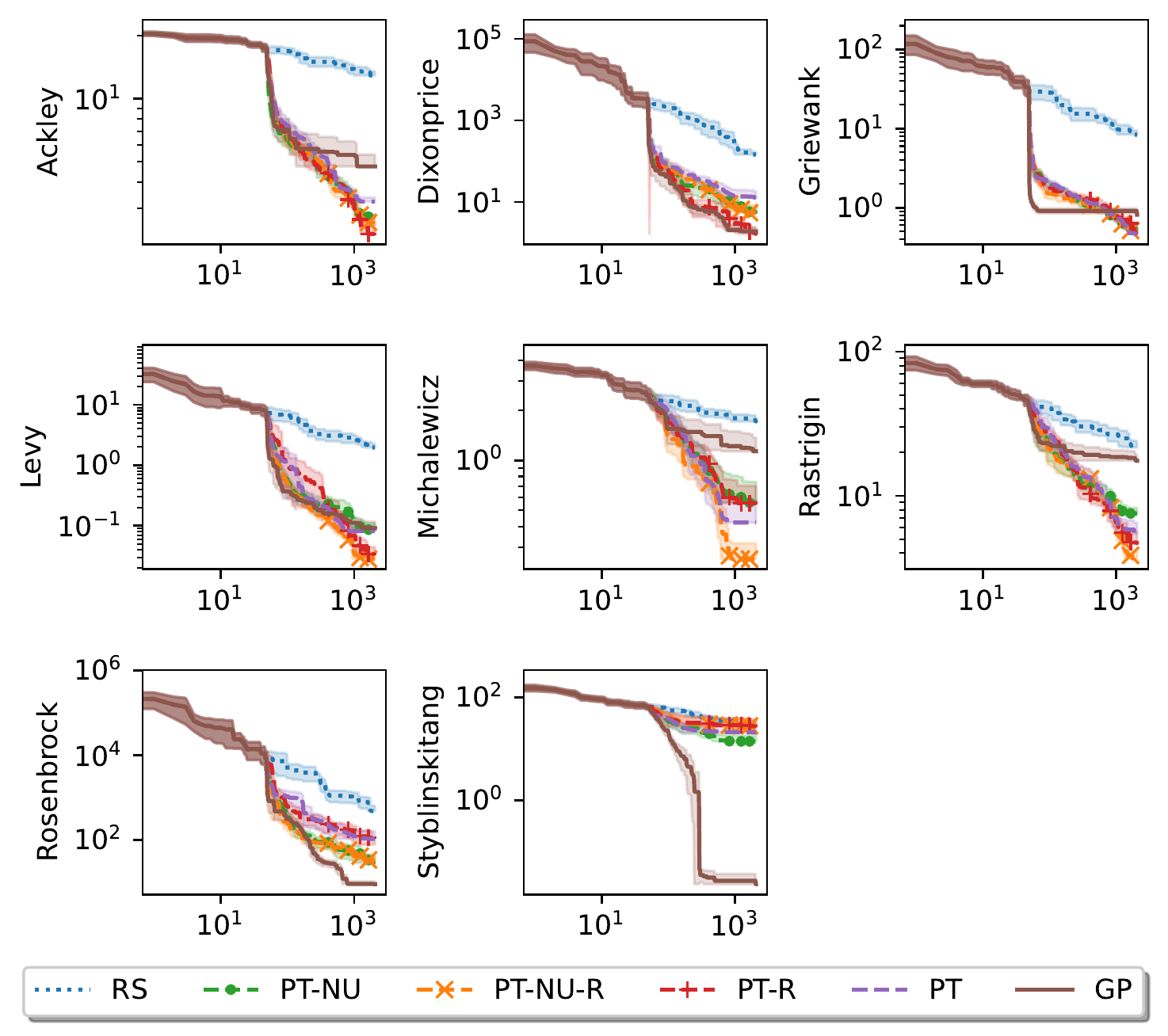}
    \caption{Simple regret results versus the number of samples in 5 dimensions.}
    \label{fig:bo_all_runs_5d_sample}
\end{figure}

\begin{figure}[H]
    \centering
    \includegraphics[width=0.9\textwidth]{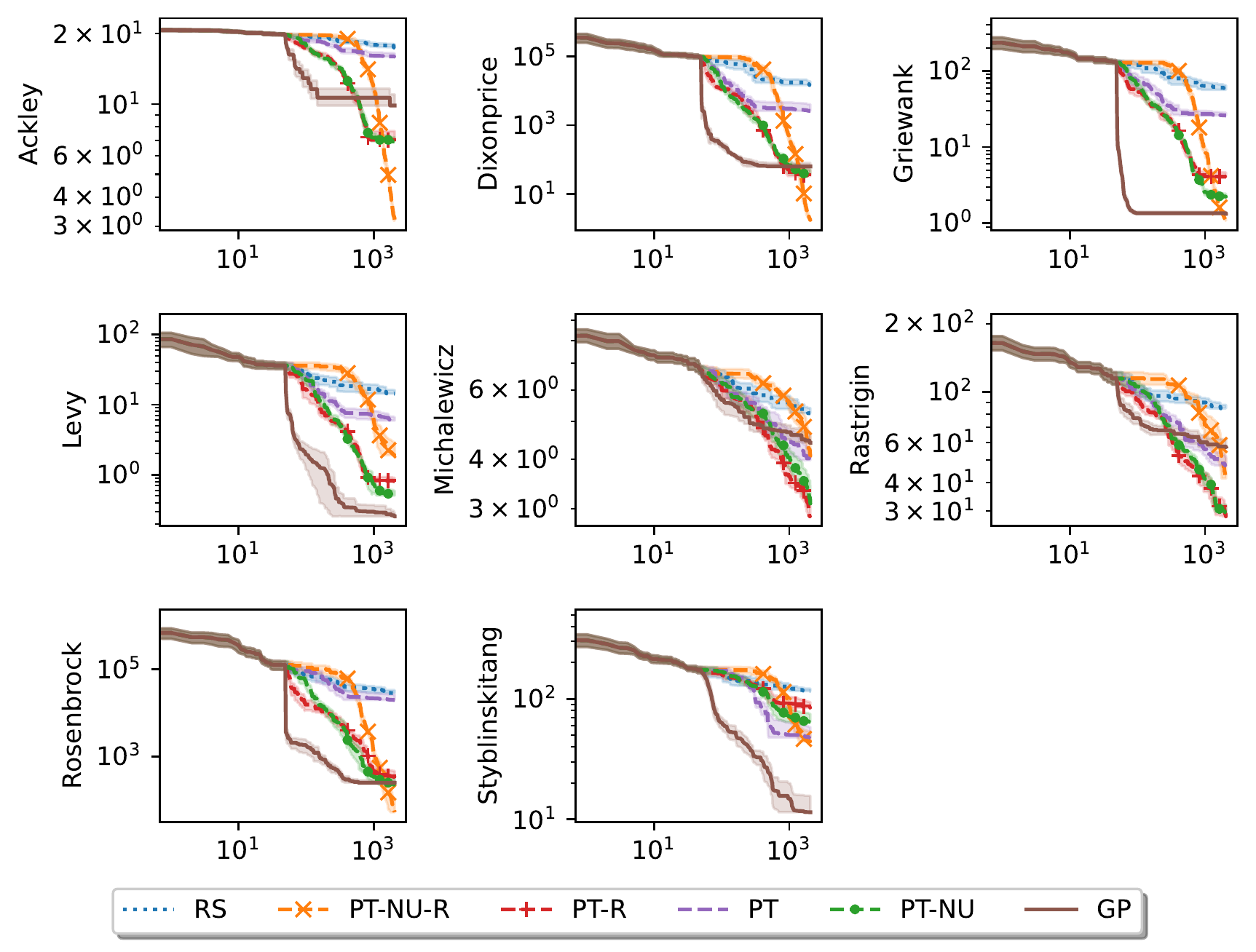}
    \caption{Simple regret results versus the number of samples in 10 dimensions.}
    \label{fig:bo_all_runs_10d_sample}
\end{figure}

\subsection{Ablation study}\label{app:ablation}
In this section, we discuss the influence of the hyperparameter $\varepsilon$ on the BO performance of the transformer trained with regularisation, i.e. PT-$\mathcal{R}_{\varepsilon}$.

In Figures \ref{fig:ablation_r_1d}, \ref{fig:ablation_r_2d}, \ref{fig:ablation_r_5d} and \ref{fig:ablation_r_10d} we display the simple regret results of BO runs with the method PT-$\mathcal{R}_{\varepsilon}$ for various values of this hyperparameter.
All plots are against the number of samples.
We show 5 functions that are available in 1, 2, 5 and 10 dimensions, i.e. Ackley, Dixonprice, Griewank, Levy, and Rastrigin.
%(Rosenbrock is not available in 1 dimension and Michalewicz and Styblinskitang functions being solved by all methods we do not display them).
We also display PT for reference.
\begin{figure}[H]
\centering
\begin{subfigure}[b]{0.99\textwidth}
   \includegraphics[width=1\linewidth]{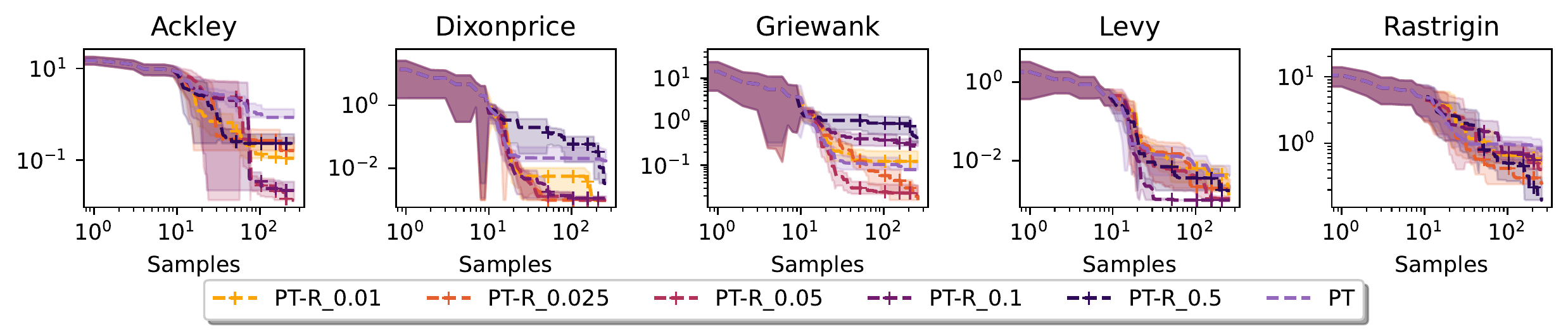}
  \caption{1 dimension with $\varepsilon \in \{0.01, 0.025, 0.05, 0.1, 0.5\}$}
   \label{fig:ablation_r_1d} 
\end{subfigure}

\begin{subfigure}[b]{0.99\textwidth}
   \includegraphics[width=1\linewidth]{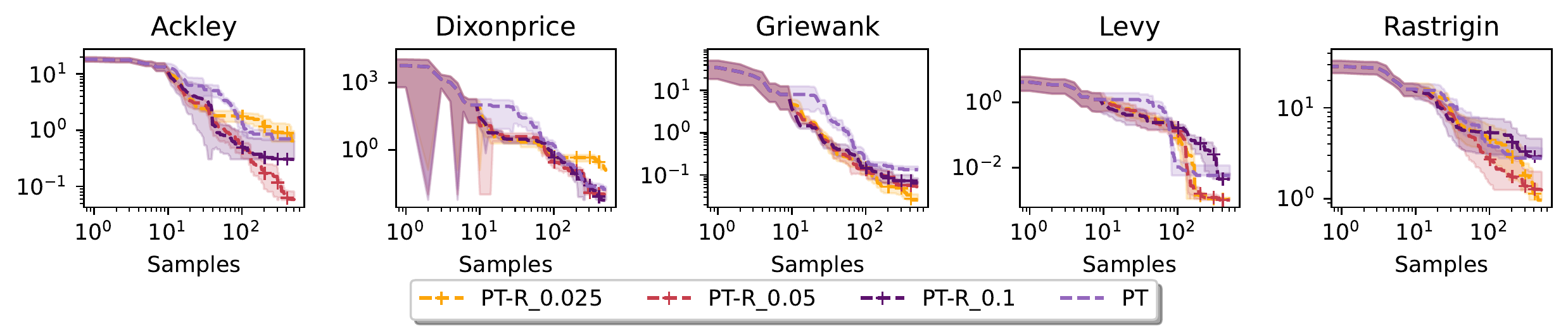}
  \caption{2 dimensions for $\varepsilon \in \{0.025, 0.05, 0.1\}$}
   \label{fig:ablation_r_2d}
\end{subfigure}

\begin{subfigure}[b]{0.99\textwidth}
   \includegraphics[width=1\linewidth]{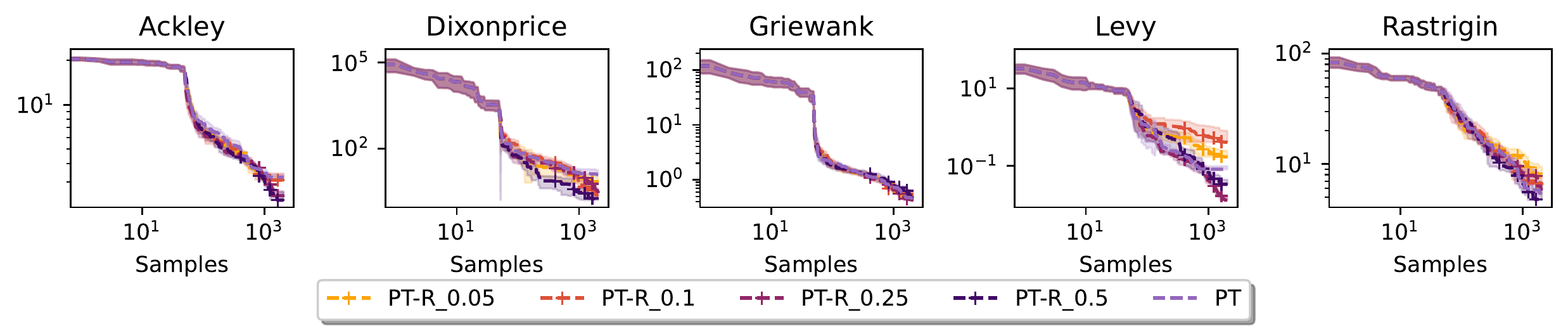}
  \caption{5 dimensions for $\varepsilon \in \{0.05, 0.1, 0.25, 0.5\}$}
   \label{fig:ablation_r_5d}
\end{subfigure}

\begin{subfigure}[b]{0.99\textwidth}
   \includegraphics[width=1\linewidth]{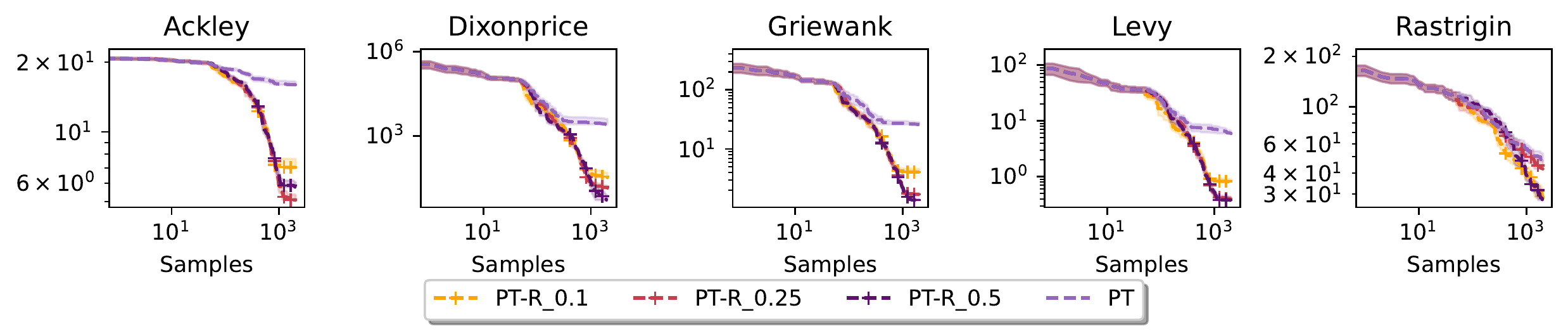}
  \caption{10 dimensions for $\varepsilon \in \{0.1, 0.25, 0.5\}$}
   \label{fig:ablation_r_10d}
\end{subfigure}

\caption[Ablation study on $\varepsilon$]{Simple regret results of PT and PT-$\mathcal{R}_{\varepsilon}$ plot against number of black-box samples with various values of $\varepsilon$ and different dimensions. For each dimension \ref{fig:ablation_r_1d}-\ref{fig:ablation_r_10d}, we trained PT-$\mathcal{R}_{\varepsilon}$ varying the value of $\varepsilon$ to showcase the effect of this component. While in a small number of cases (e.g. Dixonprice or Griewank in 5 dimensions \ref{fig:ablation_r_5d}) the value of that hyperparameter seems to be relatively unimportant, in other cases it makes a clear difference in the simple regret results. From these ablation plots we observe that progressively increasing the value of $\varepsilon$ as the dimension augments seems to be a good choice overall.
We refer the reader to Table~\ref{tab:hyperparam} in Section~\ref{app:hyperparam} for default values we used as they provide best average performance, though fine-tuning might be beneficial depending on the task.
%Intuitively, the regulariser in higher dimensions will quickly suffer from the curse of dimensionality since $\varepsilon$-neighbours will be more scarce hence we suggest to follow a larger dimension by a larger choice of $\varepsilon$.
}
\end{figure}

From those plots we learn that, while the regulariser is effective in improving the performance of the PT in BO, an \emph{over}-regularised training could potentially lead to no improvements.
For example in 2 dimensions, PT-$\mathcal{R}_{0.1}$ is less sample-efficient than PT-$\mathcal{R}_{0.05}$.
Respectively, we note that \emph{under}-regularising the training can also impact sample-efficiency as observed in Figure \ref{fig:ablation_r_5d} or Figure \ref{fig:ablation_r_10d} where PT-$\mathcal{R}_{0.05}$ and PT-$\mathcal{R}_{0.1}$ underperform compared to PT-$\mathcal{R}_{0.25}$ and PT-$\mathcal{R}_{0.5}$.

Choosing the right value of $\varepsilon$ might depend on the downstream tasks and on the assumptions made on the objective function.
But in general, our observation is to avoid over/under-regularisation, one should increase the value of $\varepsilon$ as the dimension of the problem increases, which is what we report in Table~\ref{tab:hyperparam}. 
Of course, one could tune this hyperparameter but the approach would lose in generality.

\subsection{The effect of non-uniform sampling and regularisation}
\label{app:visualExps}
In this section we illustrate the effect of the introduced components.
We want to compare the advantage of using a model trained with our components over using the original probabilistic transformer in a real BO experiment.
In order to compare PT and PT-$\nu\mathcal{R}_{\varepsilon}$ on similar data, we propose the following experimental setup.
\begin{wrapfigure}{r}{0.4\textwidth}
  \begin{center}
    \includegraphics[width=0.4\textwidth]{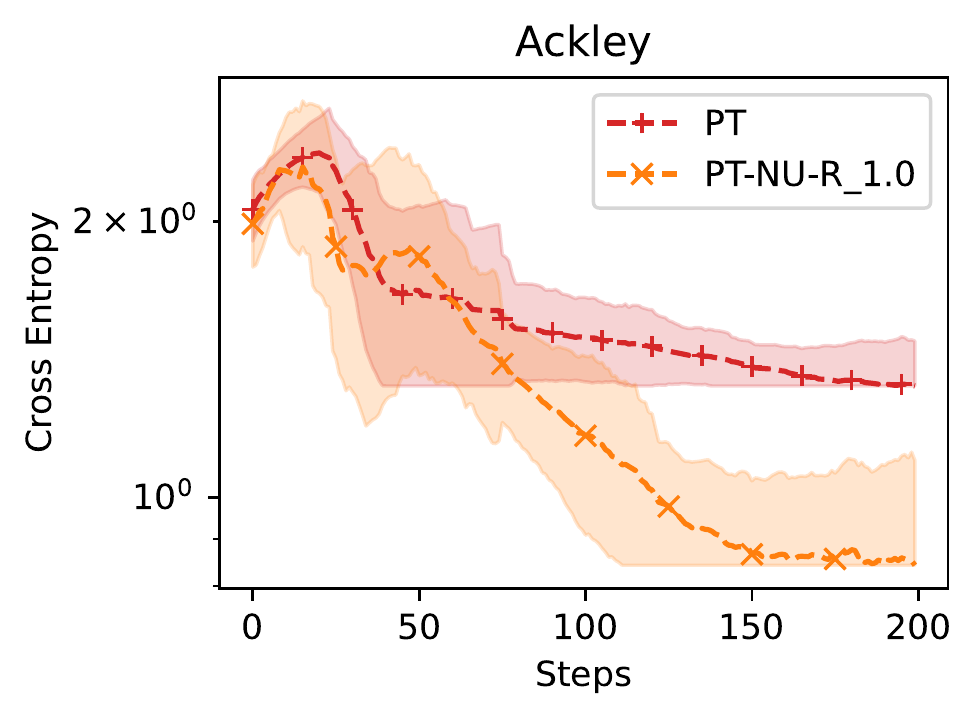}
  \end{center}
  \caption{Average cross-entropy for PT and PT-$\nu\mathcal{R}_{\varepsilon}$ on the acquired points of a reference GP-BO run. The experiment is repeated 10 times to create error bars.}
  \label{fig:ce_pred_visualisation}
\end{wrapfigure}

Considering Ackley function in 10 dimensions,
we first perform a GP-based BO run and record the initial points $D_{\text{init}} = \{\bm{x}_i^{\text{init}}, \bm{y}_i^{\text{init}}\}_{i=1}^{n_{\text{init}}}$ as well as all the acquired observations during this optimisation loop $D_{\text{acq}} = \{\bm{x}_i^{\text{acq}}, \bm{y}_i^{\text{acq}}\}_{i=1}^{n_{\text{acq}}}$.
We then use those points as train and test points for PT and PT-$\nu\mathcal{R}_{\varepsilon}$ in the following way.
At the first iteration, we give to both models only the initial points $D_{\text{init}}$. 
We make predictions at all the acquired points $D_{\text{acq}}$ and measure the average cross-entropy for both models.
At the next step, we make predictions and compute the average cross-entropy on $\{\bm{x}_i, \bm{y}_i\}_{i=2}^{n_{\text{acq}}}$ with both PT and PT-$\nu\mathcal{R}_{\varepsilon}$ having observed $D_{\text{init}} \cup \{\bm{x}_1^{\text{acq}}, \bm{y}_1^{\text{acq}}\}$.
We continue this process, progressively observing more data from $D_{\text{acq}}$ and evaluating the model's accuracy on the remaining points.

Figure \ref{fig:ce_pred_visualisation} shows the result of that experiment.
We observe that as we progressively add more GP-acquired points while predicting on the remaining, the PT-$\nu\mathcal{R}_{\varepsilon}$ run is able to make more accurate predictions early on.
This illustrates the advantage of our introduced components as the model trained with them is able to identify promising regions early compared to the original model.

\subsection{Hyperparameters}
\label{app:hyperparam}
In this section we report all settings and hyperparameters used for BO runs (see Table~\ref{tab:hyperparam_bo}) and PT training (see Table~\ref{tab:hyperparam}).

\begin{table}[H]
    \centering
    \caption{Hyperparameters of BO Runs.}
    \label{tab:hyperparam_bo}
    \begin{tabular}{ccccc}
    \toprule
    Hyperparameter & 1d & 2d & 5d & 10d \\
    \midrule
    number of seeds & \multicolumn{4}{c}{10} \\
    number of steps & 250 & 500 & 2000 & 2000 \\
    $n_{\text{init}}$ & 10 & 10 & 50 & 50 \\
    Acquisition & \multicolumn{4}{c}{Expected Improvement}\\
    \midrule
    Local test points & \multicolumn{4}{c}{1000} \\
    Initial perturbation & 0.25 & 0.25 & 0.25 & 0.5 \\
    Perturbation decay & 0.995 & 0.995 & 0.999 & 0.998\\
    \midrule
    Restarts & \multicolumn{4}{c}{10} \\
    Raw samples & \multicolumn{4}{c}{500} \\
    GP retrain from scratch & \multicolumn{4}{c}{every 10\% acquired points added}\\ 
    \bottomrule
    \end{tabular}
\end{table}

\begin{table}[H]
    \centering
    \caption{Hyperparameters of PT training.}
    \label{tab:hyperparam}
    \begin{tabular}{cp{1.6cm}p{1.6cm}p{1.6cm}p{1.6cm}}
    %\begin{tabular}{ccccc}
    \toprule
    Hyperparameter & 1d & 2d & 5d & 10d \\
    \midrule
    Learning rate & \multicolumn{4}{c}{0.003} \\
    Learning rate decay & \multicolumn{4}{c}{Linear warmup during 50 epochs and cosine decay} \\
    Adam betas & \multicolumn{4}{c}{(0.9, 0.999)} \\
    L2 regularisation & \multicolumn{4}{c}{0.01} \\
    Activation functions & \multicolumn{4}{c}{Gaussian Error Linear Unit} \\
    Number of buckets & \multicolumn{4}{c}{100} \\
    Points estimate bucket size & \multicolumn{4}{c}{Batch size $\times$ 10 $\times$ Size of dataset } \\
    Number of layers & \multicolumn{4}{c}{6} \\
    Number of attention heads & \multicolumn{4}{c}{4} \\
    Embedding size & \multicolumn{4}{c}{512} \\
    Training epochs & \multicolumn{4}{c}{400} \\
    Epoch size & \multicolumn{4}{c}{10 training steps} \\
    Batch size & 12 & 12 & 4 & 4 \\
    Size of dataset & 2000 & 2000 & 4500 & 4500 \\
    $\varepsilon$ & 0.05 & 0.05 & 0.5 & 1.0 \\
    \bottomrule
    \end{tabular}
\end{table}

\subsection{Average running time and resources}\label{app:time}
In this section we report average running times for the training of the probabilistic transformer and the BO. 
% Note that the transformer training time is reported in terms of seconds per epoch as not all training procedures have the same number of epochs.
Note that we report in Table \ref{tab:time_per_iteration} the BO-related times in seconds per iteration as not all experiments have the same number of runs.
Table \ref{tab:time_per_iteration} shows the order of magnitude acceleration from running BO with the PT instead of the GP (even if not retrained from scratch at every step).
\begin{table}[H]
    \centering
    \caption{Average running times (second per iteration).}
    \begin{tabular}{c|cc}
        \toprule
        Process &  PT-$\nu\mathcal{R}_{\varepsilon}$-BO & GP-BO \\
        Average time (sec/it) & 1.58$\cdot10^{-2}$ & 2.9$\cdot10^{-1}$ \\
        \bottomrule
    \end{tabular}
    \label{tab:time_per_iteration}
\end{table}

To clarify how the entries of Table~\ref{tab:time_per_iteration} are obtained, we now consider the runs in 2 dimensions. 
We trained PT-$\nu\mathcal{R}_{\varepsilon}$ for approximately 3 hours.
The BO runs then took on average of 3.8 seconds for the 500 BO steps.
So for 10 seeds on 16 tasks, the BO runs took approximately 10 minutes for a total of about 3.2 hours.
On the other hand, on average the GP-based BO run takes 105 seconds for 500 steps.
This totals to about 4.7 hours.

Take another example in 10 dimensions where we run 8 tasks for 2000 steps over 10 seeds.
We train PT-$\nu\mathcal{R}_{\varepsilon}$ for about 5.7 hours.
The BO runs take on average 26.7 seconds for 2000 steps compounding to 1.2 hours adding to the 5.7 hours of training, i.e. approximately 6.9 hours in total.
Respectively the average running time for one BO run with a GP model is 590 seconds.
This sums up to 26.2 hours in total.

We can see the significant time gain obtained by training a PT once and simply using forward passes during a BO run compared to using a task-specific model that is periodically retrained on newly acquired points. The more tasks we need to run, the higher the dimension and the larger the number of steps, the more attractive our model becomes.
The previous examples and the other dimensions are summarised in Table \ref{tab:time_total}.
\begin{table}[H]
    \centering
    \caption{Approximate total running times, including training for PT methods (hours).}
    \begin{tabular}{c|cccc}
        \toprule
        Process & 1D (6 tasks) & 2D (16 tasks) & 5D (8 tasks) & 10D (8 tasks) \\
        \midrule
        PT-$\nu\mathcal{R}_{\varepsilon}$-BO & 3.1 & 3.2 & 5.7 & 6.9 \\
        GP-BO & 3.2 & 4.7 & 21.4 & 26.2 \\
        \bottomrule
    \end{tabular}
    \label{tab:time_total}
\end{table}

We also report the hardware used for our experiments.
All BO runs and transformers trainings were done on one GPU, an Nvidia Tesla V100 SXM2 16GB.

Finally, please note that our provided code builds on the code released by \citet{2021_Muller} which can be found at \url{https://github.com/automl/TransformersCanDoBayesianInference}.

\end{document}